\providecommand{\algorithmname}{\small Algorithm}
\newcommand{\colcut}{\hspace{-0.3em}}
\newcommand{\colcutb}{\hspace{-0.4em}}
 \theoremstyle{plain}
\newtheorem{theorem}{Theorem}[section]
\newtheorem{lemma}[theorem]{Lemma}
\newtheorem{corollary}[theorem]{Corollary}
\theoremstyle{definition}
\newtheorem{prop}{Proposition}
\theoremstyle{remark}
\newcommand{\R}{\mathbb{R}}
\newcommand{\argmin}{\arg\!\min}
\newcommand{\relu}{{\rm ReLU}}
\newcommand{\vct}[1]{\bm{#1}}
\newcommand{\mtx}[1]{\bm{#1}}
\newcommand{\E}{\mathbb E}
\newcommand{\Prob}{\mathbb P}
\def\vctx{\bm x} \def\vcthatx{\hat{\bm x}} \def\vctz{\bm z} \def\vcthid{\bm h} \def\vcthatz{\hat{\bm z}} 
\newcommand\citet[1]{\citeauthor{#1}~[\citeyear{#1}]}
\newcommand\citep[1]{\cite{#1}}
\newenvironment{reftheorem}[1]{\medskip\parindent 0pt{\bf Theorem \ref{#1}}\em }{\vspace{1em}}
\definecolor{gray}{rgb}{0.5,0.5,0.5}
            \newcommand{\todo}[1]{}
        \newcommand{\outline}[1]{}
        \newcommand{\textgray}[1]{}
        \newcommand{\commenttext}[1]{}
        \newcommand{\commentfoot}[1]{}
        \newcommand{\commentselfoot}[2]{}
        \newcommand{\commentselrep}[2]{}
        \newcommand{\topic}[1]{}
        \newcommand{\commenthl}[1]{}
            \newcommand{\todo}[1]{{\textcolor{red}{[[TODO: {#1}]]}}}
        \newcommand{\outline}[1]{{\textcolor{blue}{[[{#1}]]}}}
        \newcommand{\textgray}[1]{\textcolor{gray}{[[{#1}]]}}
        \newcommand{\commenttext}[1]{\textcolor{red}{[[{#1}]]}}
        \newcommand{\commentfoot}[1]{\footnote{\textcolor{red}{\textit{#1}}}}
        \newcommand{\commentselfoot}[2]{{\textcolor{blue}{#1}}\commenttext{#2}}
        \newcommand{\commentselrep}[2] {{\textcolor{blue}{#1}} {\textcolor{green}{[[\textit{#2}]]}}}
        \newcommand{\topic}[1]{\textcolor{gray}{\textbf{(#1.)}}}
        \newcommand{\commenthl}[1]{\textcolor{blue}{[HL: #1]}}
\newcommand{\cutsectionup}{}
\newcommand{\cutsectiondown}{}
\newcommand{\cutsubsectionup}{}
\newcommand{\cutsubsectiondown}{}
\newcommand{\mytitle}{Towards Understanding the Invertibility of  Convolutional Neural Networks}
\title{\mytitle}
\author{
	Anna C.~Gilbert$^1$ \quad Yi Zhang$^1$ \quad Kibok Lee$^1$ \quad Yuting Zhang$^1$ \quad Honglak Lee$^{1,2}$\\
	$^1$University of Michigan, Ann Arbor, MI 48109 \\
	$^2$Google Brain, Mountain View, CA 94043\\
	\texttt{\{annacg,yeezhang,kibok,yutingzh\}@umich.edu} \\ 
	\texttt{honglak@\{umich.edu,google.com\}} \\
}
\author{\normalfont Paper 1906}
\begin{document}

\maketitle

\begin{abstract} 
Several recent works have empirically observed that Convolutional Neural Nets (CNNs) are (approximately) invertible. 
To understand this approximate invertibility phenomenon and how to leverage it more effectively, we focus on a theoretical explanation and develop a mathematical model of sparse signal recovery that is consistent with CNNs with random weights.
We give an exact connection to a particular model of model-based compressive sensing (and its recovery algorithms) and random-weight CNNs.
We show empirically that several learned networks are consistent with our mathematical analysis and then demonstrate that with such a simple theoretical framework, we can obtain reasonable reconstruction results on real images. 
We also discuss gaps between our model assumptions and the CNN trained for classification in practical scenarios. 
\vspace*{-0.1in}
\end{abstract} 
\cutsectionup
\section{Introduction}
\cutsectiondown

Deep learning has achieved remarkable success in many technological areas, including automatic speech recognition~\citep{hinton2012deep,hannun2014deep}, natural language processing~\citep{collobert2011natural,mikolov2013distributed,cho2014learning}, and computer vision, in particular with deep Convolutional Neural Networks (CNNs)~\citep{lecun1989backpropagation,alexnet,vggnet,szegedy2015going}.

Following the unprecedented success of deep networks, there have been some theoretical works~\citep{Arora:2014vi,Arora:2015vt,Paul:2014vb} that suggest several mathematical models for different deep learning architectures.
However, theoretical analysis and understanding lag behind the very rapid evolution and empirical success of deep architectures, and 
more theoretical analysis is needed to better understand the state-of-the-art deep architectures, and possibly to improve them further.

In this paper, we address the gap between the empirical success and theoretical understanding of the CNNs, in particular its invertibility (i.e., reconstructing the input from the hidden activations), by analyzing a simplified mathematical model using random weights (See Section~\ref{sec:relatedwork-randomfilters} and \ref{sec:exp-randomfilters} for the practical relevance of the assumption).

This property is intriguing because CNNs are typically trained with discriminative objectives (i.e., unrelated to reconstruction) with a large amount of labels, such as the ImageNet dataset~\citep{imagenet}.
\citet{recover-lp-pooling} studied signal discovery from generalized pooling operators using image patches on non-convolutional small scale networks and datasets.
\citet{invert-cnn} used upsampling-deconvolutional architectures to invert the hidden activations of feedforward CNNs to the input domain. 
In another related work, \citet{what-where} proposed a stacked what-where autoencoder network and demonstrated its promise in unsupervised and semi-supervised settings. 
\citet{deconv-recon} showed that CNNs discriminately trained for image classification (e.g., VGGNet~\citep{vggnet}) are almost fully invertible using pooling switches.
Despite these interesting results, there is no clear theoretical explanation as to why CNNs are invertible yet.

\textgray{
Our setup is also similar to recent work of Zhang et al. (http://bit.ly/2a6sR) which empirically shows that CNNs are almost fully invertible using pooling switches (it also shows that encouraging this property improves large-scale image classification).
}

We introduce three new concepts that, coupled with the accepted notion that images have sparse representations, guide our understanding of CNNs:
\begin{enumerate}
		\item we provide a \emph{particular} model of sparse linear combinations of the learned filters that are consistent with natural images; also, this model of sparsity is itself consistent with the feedforward network;
	\item we show that the effective matrices that capture explicitly the convolution of multiple filters exhibit a model-Restricted Isometry Property (model-RIP)~\citep{Baraniuk:2010hg}; and
	\item our model can explain each layer of the feedforward CNN algorithm as one iteration of Iterative Hard Thresholding (IHT)~\citep{Blumensath:2009tm} for model-based compressive sensing and, hence, we can reconstruct the input simply and accurately.
			\end{enumerate}
In other words, we give a theoretical connection to a particular version of model-based compressive sensing (and its recovery algorithms) and CNNs.
Using the connection, we give a reconstruction bound for a single layer in CNNs, which can be possibly extended to multiple layers.
In the experimental sections, we show empirically that large-scale CNNs are consistent with our mathematical analysis.
This paper explores these properties and elucidates specific empirical aspects that further mathematical models might need to take into account.

\textgray{Gaussian random weights are not practically relevant: Basically, this is described in Section~\ref{sec:model-rip}, model-RIP and random filters. First few sentences are added as footnote. cifar-10 experiment would be added in either experiment section or supplementary material.\\
We note that a model should not be an exact replica of a real setting; it should be a simplified but representative abstraction of practical settings. A number of work show that random weight CNNs still achieve surprisingly good classification accuracy although they may not match the state-of-the-art results. For example, see: Saxe et al. (http://bit.ly/2a1jOM), Jarret et al. (http://bit.ly/2a2iDG), Pinto et al. (http://bit.ly/2a6siG). We also tried random filters to build a CNN of three conv-pooling layers, and train a softmax classifier upon it for Cifar-10. It gives reasonable accuracy (~$74\%$), demonstrating the practical relevance of our assumption. This is not only about the random initialization of networks. Moreover, while the learned filters do not necessarily follow a Gaussian distribution, the model-RIP bound still holds in practice as we experimentally showed.
}

 \cutsectionup
\section{Preliminaries}
\cutsectiondown

In this section, we begin with discussion on the effectiveness of random weights in CNNs, and then we provide the notations for CNNs, compressive sensing, and sparse signal recovery.

\textgray{Single vs multi layers, 1D vs 2D\\We focus on a single layer in the analysis instead of multiple layers. We can extend the equivalency on a single layer of CNNs to multiple layer CNNs simply by using the output on one layer as the input to another, still using the steps of the inner loop of IHT. Similarly, we can easily generalize our analysis of convolution in one dimension to multiple dimensions and, in fact, our experiments are carried out in two dimensions on real images.}

\textgray{Our model allows stride>1.\\Not all filters in each dashed box in Fig.1 are identical; We take K filters (output channels) into account. -> this is clearly described in the caption of fig 1}

\cutsubsectionup
\subsection{Effectiveness of Gaussian Random Filters}
\label{sec:relatedwork-randomfilters}
\cutsubsectiondown

CNNs with Gaussian random filters have been shown to be surprisingly effective in unsupervised and supervised deep learning tasks.
\citet{jarrett2009best} showed that random filters in 2-layer CNNs work well for image classification. Also, ~\citet{saxe2011random} observed that convolutional layer followed by pooling layer is frequency selective and translation invariant, even with random filters, and these properties lead to good performance for object recognition tasks.
On the other hand, ~\citet{giryes2015deep} proved that CNNs with random Gaussian filters have metric preservation property, and they argued that the role of training is to select better hyperplanes discriminating classes by distorting boundary points among classes.
According to their observation, random filters are in fact a good choice if training data are initially well-separated.
Also, ~\citet{he2016powerful} empirically showed that random weight CNNs can do image reconstruction well.

\textgray{In particular, ~\citet{Bengio:2010ud} demonstrated that the success of training deep neural networks largely depends on random initialization.}
\commenthl{Explain each work in more detail---work by work, and put more related work.}
\commenthl{I don't see the main rationale about talking about importance of random initializations.
Random initialization is always used by default, and it doesn't support why random weights (no training) are reasonable.
I think it's better to focus on random filter networks, and avoid talking about non-random filter networks.}
\textgray{~\citet{giryes2015deep} analyzed the role of initialization and training for CNNs by calculating the histogram of distances among representations of different examples.
Specifically, they showed that, with Gaussian random initialization, the training algorithm hardly changes the histogram on MNIST and CIFAR-10 datasets.}
\commenthl{Is the above property about random weights or learned weights?}

To better demonstrate the effectiveness of Gaussian random CNNs, we evaluate their classification performance on CIFAR-10 \citep{cifar10} in Section~\ref{sec:exp-randomfilters}.
Although the performance is not the state-of-the-art, it is surprisingly good considering that the networks are almost untrained.
Our theoretical results may provide a new perspective on explaining these phenomena. 

\cutsubsectionup
\subsection{Convolutional Neural Nets}
\label{sec:cnn}
\cutsubsectiondown

\begin{figure}\centering
\includegraphics[width=.43\textwidth]{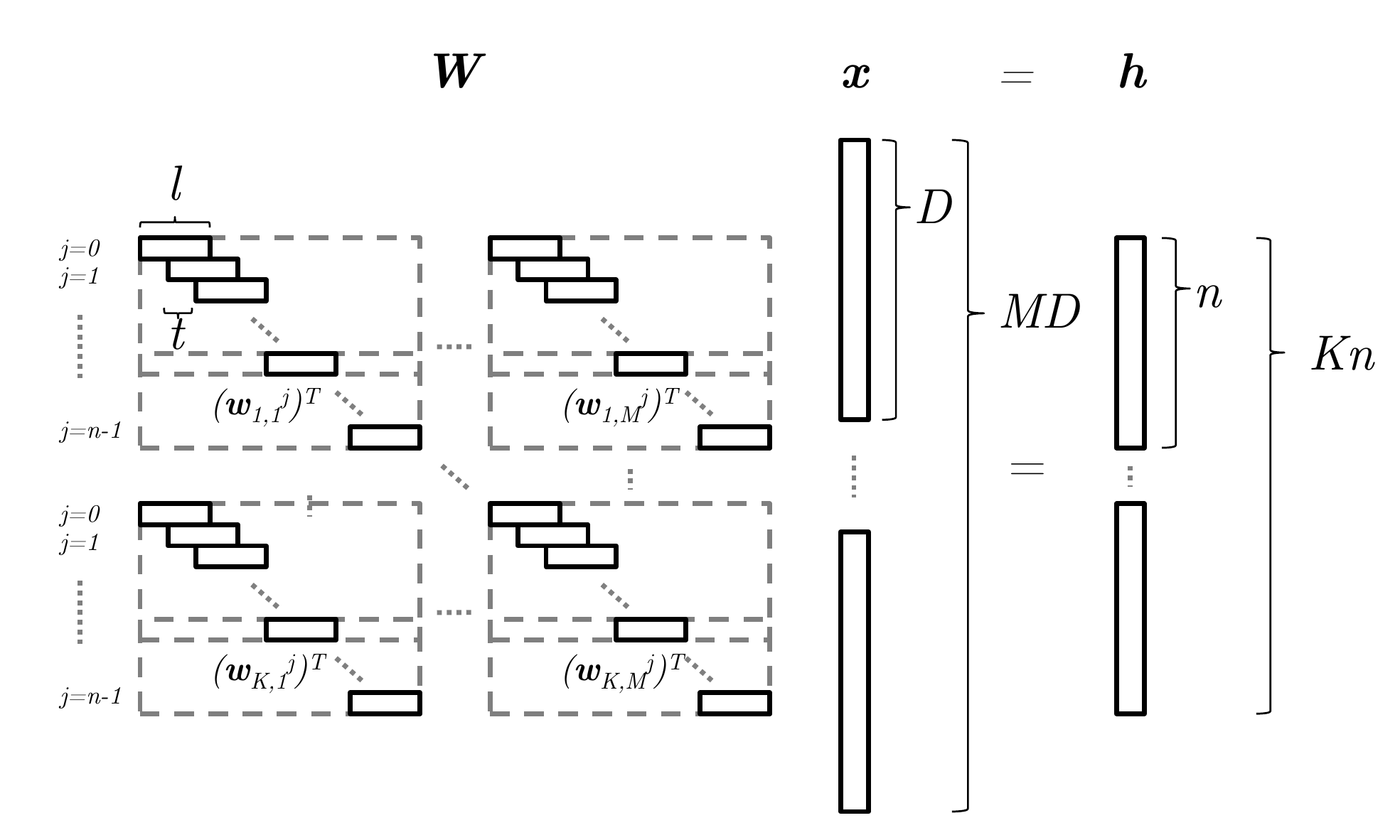}
\caption{One-dimensional CNN architecture where $\mtx{W} \in \R^{Kn \times MD}$ is the matrix instantiation of convolution over $M$ channels with a filter bank consisting of $K$ different filters. Note that a filter bank has K filters of size $l\times M$, such that there are $lMK$ parameters in this architecture.}
\label{fig:1d-cnn-arch}
\end{figure}

For simplicity, we vectorize input signals to 1-d signal;
for any operations we would ordinarily carry out on images, we do on vectors with the appropriate modifications.
We define a single layer of our CNN as follows.
We assume that the input signal $\vctx$ consists of $M$ channels, each of length $D$, and we write $\vctx \in \R^{MD}$.
For each of the input channels, $m = 1,\ldots,M$, let $\vct{w}_{i,m}$, $i = 1,\ldots, K$ denote one of $K$ filters, each of length $\ell$.
Let $t$ be the stride length, the number of indices by which we shift each filter.
Note that $t$ can be larger than 1.
We assume that the number of shifts, $n = (D - \ell)/t + 1$, is an integer.
Let $\vct{w}_{i,m}^j$ be a vector of length $D$ that consists of the $(i,m)$-th filter shifted by $jt$, $j = 0,\ldots,n-1$ (i.e., $\vct{w}_{i,m}^j$ has at most $\ell$ non-zero entries).
We concatenate over the $M$ channels each of these vectors (as row vectors) to form a large matrix $\mtx{W}$, which is the $Kn \times MD$ matrix made up of $K$ blocks of the $n$ shifts of each filter in each of $M$ channels.
We assume that $Kn \geq MD$ and the $Kn$ row vectors of $\mtx{W}$ span $\R^{MD}$ and that we have normalized the rows so that they have unit $\ell_2$ norm.
The hidden units of the feed-forward CNN are computed by multiplying an input signal $\vctx \in \R^{MD}$ by the matrix $\mtx{W}$ (i.e., convolving, in each channel, by a filter bank of size $K$, and summing over the channels to obtain $Kn$ outputs).\footnote{Convolution can be computed more efficiently than matrix multiplication, but they are mathematically equivalent.}
We use $\vcthid = \mtx{W} \vctx$ for the hidden activation computed by a single layer CNN without pooling.
Figure~\ref{fig:1d-cnn-arch} illustrates the architecture. 
As a nonlinear activation, we apply the $\relu$ function to the $Kn$ outputs, and then selecting the value with maximum absolute value in each of the $K$ blocks;
i.e., we perform max pooling over each of the convolved filters.

\cutsubsectionup
\subsection{Compressive Sensing} 
\label{sec:comp-sense}
\cutsubsectiondown

In compressive sensing, we assume that there is a latent sparse code $\vctz$ that generates the visible signal $\vctx$.
We say that a $p \times q$ matrix $\mtx{\Phi}$ with $q > p$ satisfies the Restricted Isometry Property RIP$(k,\delta_k)$ if there is a distortion factor $\delta_k > 0$ such that for all $\vctz \in \R^q$ with exactly $k$ non-zero entries, 
$(1 - \delta_k) \|\vctz\|^2_2 \leq \|\mtx{\Phi} \vctz\|^2_2 \leq (1 + \delta_k) \|\vctz\|^2_2$.
If $\mtx{\Phi}$ satisfies RIP with sufficiently small $\delta_k$ and if $\vctz$ is $k$-sparse, then given the vector $\vctx = \mtx{\Phi} \vctz \in \R^p$, we can efficiently recover $\vctz$ (see~\citet{candes2008restricted} for more details)\footnote{We note that this is a sufficient condition and that there are other, less restrictive sufficient conditions, as well as more complicated necessary conditions.
}.
There are many efficient algorithms, including $\ell_1$ sparse coding (e.g., $\ell_2$ minimization with $\ell_1$ regularization) and greedy and iterative algorithms, such as Iterative Hard Thresholding (IHT) \citep{Blumensath:2009tm}.

\noindent\textbf{Model-based compressive sensing.}
While sparse signals are a natural model for some applications, they are less realistic for CNNs.
We consider a vector $\vctz \in \R^{Kn}$ as the true sparse code generating the CNN input $\vctx$ with a particular model of sparsity. Rather than permitting $k$ non-zero entries anywhere in the vector $\vctz$, we divide the support of $\vctz$ into $K$ contiguous blocks of size $n$ and we stipulate that from each block there is at most one non-zero entry in $\vctz$ with a total of $k$ non-zero entries.
We call a vector with this sparsity model \emph{model-$k$-sparse} and denote the union of all $k$-sparse subspaces with this structure ${\cal M}_k$.
It is clear that ${\cal M}_k$ contains $n^k \binom{K}{k}$ subspaces. In our analysis, we consider linear combinations of two model-$k$-sparse signals.
To be precise, suppose that $\vctz = \alpha_1 \vctz_1 + \alpha_2 \vctz_2$ is the linear combination of two elements in ${\cal M}_k$.
Then, we say that $\vctz$ lies in the linear subspace ${\cal M}^2_k$ that consists of all linear combinations of vectors from ${\cal M}_k$.\footnote{Intuitively, ${\cal M}^2_k$ is a subspace where the error signal $\vcthatz - \vctz$ lies in and used for reconstruction bound derivation; see Appendix~\ref{sec:modelRIP}.}
We say that a matrix $\mtx{\Phi}$ satisfies the \emph{model-RIP} 
if there is a distortion factor $\delta_k > 0$ such that, for all $\vctz \in {\cal M}_k$, 
\begin{equation}
(1 - \delta_k) \|\vctz\|^2_2 \leq \|\mtx{\Phi} \vctz\|^2_2 \leq (1 + \delta_k) \|\vctz\|^2_2.\label{eq:model-rip}
\end{equation}
See~\citet{Baraniuk:2010hg} for the definitions of model sparse and model-RIP, as well as the necessary modifications to account for signal noise and compressible (as opposed to exactly sparse) signals, which we don't consider in this paper to keep our analysis simple.
Intuitively, a matrix satisfying the model-RIP is a nearly orthonormal matrix of a particular set of sparse vectors with a particular sparsity model or pattern.

For our analysis, we also need matrices $\mtx{\Phi}$ that satisfy the model-RIP for vectors $\vctz \in {\cal M}^2_k$.
We denote the distortion factor $\delta_{2k}$ for such matrices;
note that $\delta_k \leq \delta_{2k} < 1$.

\begin{algorithm}
	\caption{\small Model-based IHT}
	\label{alg:model-based-iht}
	\begin{small}
	\begin{algorithmic}[1]
		\REQUIRE model-RIP matrix $\mtx{\Phi}$, measurement $\vctx$ $(= \mtx{\Phi}\vctz)$, structured sparse approximation algorithm $\mathbb M$
		\ENSURE $k$-sparse approximation $\vcthatz$
		\STATE Initialize $\vcthatz_0 = 0$, $\vct{d} = \vctx$, $i = 0$ 
		\WHILE{stopping criteria not met}
			\STATE $i \leftarrow i+1$ 
			\STATE $\vct{b} \leftarrow \vcthatz_{i-1} + \mtx{\Phi}^T \vct{d}$ 
			\STATE $\vcthatz_i \leftarrow {\mathbb M}(\vct{b}, k)$ 
			\STATE $\vct{d} \leftarrow \vctx - \mtx{\Phi} \vcthatz_i$ 
		\ENDWHILE
	 	\STATE \textbf{return} $\vcthatz \leftarrow \vcthatz_i$
	\end{algorithmic}
	\end{small}
\end{algorithm}

Many efficient algorithms have been proposed for sparse coding and compressive sensing~\citep{olshausen1996emergence,mallat1993,beck2009}.
As with traditional compressive sensing, there are efficient algorithms for recovering model-$k$-sparse signals from measurements~\citep{Baraniuk:2010hg}, assuming the existence of an efficient structured sparse approximation algorithm $\mathbb M$, that given an input vector and the sparsity parameter, returns the vector closest to the input with the specified sparsity structure.

In CNNs, the max pooling operator finds the downsampled activations that are closest to the activations of the original size by retaining the most significant values.
The max pooling can be viewed as two steps: 1)~zeroing out the locally non-maximum values; 2)~downsampling the activations with the locally maximum values retained. 
To study the pooled activations with sparsity structures, we can recover dimension loss from the downsampling step by an unsampling operator. 
This procedure defines our structured sparse approximation algorithm $\vcthatz = \mathbb{M}(\vct{h},k)$, 
where $\vct{h}$ is the original (unpooled) response, 
$k$ is the sparsity parameter for block-sparsification, and
$\vcthatz$ is the sparsified response after pooling but without shrinking the length (i.e., the locally non-maximum values are zeroed out such that $\vcthatz$ has the same dimension as $\vct{h}$). 
Note that $\vcthatz$ is a model-$k$-sparse signal by construction. 
On the other hand, without considering the block-sparsification, we actually apply the following max pooling and upsampling operations:
\begin{equation}
\vcthatz = \operatorname{upsample}( \operatorname{max-pool}(\vct{h}), \vct{s} ),
\label{eq:sparse-approx}
\end{equation}
where $\vcthatz$ is the pooled response, $\vct{h}$ is the filter response of CNN given input before max pooling (see Section~\ref{sec:cnn}), and $\vct{s}$ denotes the upsampling switches that indicate whereto place the non-zero values in the upsampled activations.
Since our theoretical analysis does not depend on $\vct{s}$ but depends on $k$, any type of valid upsampling switches will be consistent with the block-sparsification (model-$k$-sparse) assumption, thus we will use $\mathbb{M}(\vct{h},k)$ to denote the structured sparse approximation algorithm \eqref{eq:sparse-approx} without worrying about $\vct{s}$.

We use model-sparse version of IHT \citep{Blumensath:2009tm} as our recovery algorithm, 
as one iteration of IHT for our model of sparsity captures exactly a feedforward CNN.\footnote{Multiple iterations of IHT can improve the quality of signal recovery. However, it is rather equivalent to the recurrent version of CNNs and does not fit to the scope of this work.}
Algorithm~\ref{alg:model-based-iht} describes the model-based IHT algorithm.
In particular, the sequence of steps 4--6 in the middle IHT
is exactly one layer of a feedforward CNN.
As a result, the theoretical analysis of IHT for model-based sparse signal recovery serves as a guide for how to analyze the approximation activations of a CNN. 

\commenthl{Cite sparse coding, MP and ISTA, etc. papers. Say they are also related to Alg 1. No harm in citing those papers.}
\textgray{when talking about convolutional networks, the paper does not cite its original paper in the late 80s. Furthermore, the paper omits entirely the research on (predictive) sparse coding, ignoring classic algorithms like marching-pursuit (MP) and iterative shrinkage-thresholding algorithm (ISTA) while posing algorithm 1 and algorithm 2 like something new without proper discussions. Also, the citations for compressive sensing are mostly on some survey papers which fails to describe the origin of algorithms and ideas.\\
Comments: (not addressed)\\
\textgray{<<Yi: I think this's been addressed in the previous discussion. (See violet above)>>}
\textgray{<<Kibok: Regarding the opinion that we pose alg 1 and 2 like something new, alg 1 should be fine since we cite Blumensath \& Davies (2009).
However, I think alg 2 needs citation or some statement that this is not novel thing.
Maybe we can cite any sparse coding paper, if it is relevant. >>}\\
The point is NOT to come up with new algorithms for reconstruction nor do we claim novelty of IHT (Algorithm 1). Indeed, the point of the paper is to show that the inner loop or one iteration of these classic sparse reconstruction algorithms are consistent with each layer of a CNN. We did not refer to other classic algorithms because IHT is the closest algorithm. (Although we note that it is an iterative hard thresholding rather than shrinkage algorithm.)\\
}

 \section{Analysis}
\label{sec:analysis}
\cutsectiondown

Following the idea of compressive sensing in Section~\ref{sec:comp-sense}, we assume that the input $\vctx$ is generated from a latent model-$k$-sparse signal $\vctz$ with basis vectors $\mtx{\Phi}$, which turns out to be $\mtx{W}^T$ by Theorem~\ref{thm:modelRIP} (i.e., $\vctx = \mtx{W}^T \vctz$).
Therefore, our analysis views the output of CNN (with pooling) is a reconstruction of $\vctz$ (i.e., $\vcthatz = \mathbb{M}(\mtx{W}\vctx,k)$), and $\mtx{W}^T$ can be used to reconstruct $\vctx$ from $\vcthatz$:
that is, $\vcthatx = \mtx{W}^T \vcthatz$.

\subsection{CNN Filters with Positive and Negative Pairs}
\label{sec:posneg}

Here we assume that all of the entries in the vectors are real numbers rather than only non-negative like when using $\relu$.
This setup is equivalent to using Concatenated $\relu$ (CReLU) \citep{crelu} as an activation function
(i.e., keeping the positive and negative activations as separate hidden units) with tied decoding weights.
The CReLU activation scheme is justified by the fact that trained CNN filters come in positive and negative pairs and that it achieves superior classification performance in several benchmarks.
This setting makes a CNN much easier to analyze within the model compressed sensing framework.

To motivate 
the setting,
we begin with a simple example.
Suppose that the matrix $\mtx{W}$ is an orthonormal basis for $\R^{MD}$ and define $\mtx{\Psi} = \begin{bmatrix} \mtx{W}^{T} & - \mtx{W}^{T} \end{bmatrix}$.
\begin{prop}
A one-layer CNN using the matrix $\mtx{\Psi}^{T}$, with no pooling, gives perfect reconstruction (with the matrix $\mtx{\Psi}$) for any input vector $\vctx \in \R^{MD}$. 
\end{prop}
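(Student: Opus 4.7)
The plan is to trace the forward and backward pass through the one-layer CNN explicitly and use the orthonormality of $\mtx W$. Since we are working in the positive/negative pair (CReLU) regime of Section~\ref{sec:posneg}, the encoding matrix $\mtx{\Psi}^T$ has size $2MD \times MD$ and takes the block form $\mtx{\Psi}^T = \begin{bmatrix} \mtx W \\ -\mtx W \end{bmatrix}$. First I would compute the pre-activation $\vct h = \mtx{\Psi}^T \vctx = \begin{bmatrix} \mtx W \vctx \\ -\mtx W \vctx \end{bmatrix}$, so the two halves of $\vct h$ are exact negatives of each other.

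Next I would apply $\relu$ componentwise and split $\mtx W \vctx$ into its positive and negative parts. Writing $(\vct y)_+ := \max(\vct y, 0)$ and $(\vct y)_- := \max(-\vct y, 0)$ (both nonnegative, with $\vct y = (\vct y)_+ - (\vct y)_-$), the post-activation vector becomes
\begin{equation}
\relu(\vct h) = \begin{bmatrix} (\mtx W \vctx)_+ \\ (\mtx W \vctx)_- \end{bmatrix}.
\end{equation}
This is the key identity: the CReLU encoding loses no information because the sign is stored in the choice of block.

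Finally I would decode by applying $\mtx{\Psi} = \begin{bmatrix} \mtx W^T & -\mtx W^T \end{bmatrix}$:
\begin{equation}
\vcthatx = \mtx{\Psi}\,\relu(\vct h) = \mtx W^T (\mtx W \vctx)_+ - \mtx W^T (\mtx W \vctx)_- = \mtx W^T \bigl[(\mtx W \vctx)_+ - (\mtx W \vctx)_-\bigr] = \mtx W^T \mtx W \vctx.
\end{equation}
Orthonormality of $\mtx W$ gives $\mtx W^T \mtx W = \mtx I$, hence $\vcthatx = \vctx$ for every $\vctx \in \R^{MD}$.

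There is no real obstacle; the claim is essentially a bookkeeping exercise. The only subtlety is making explicit that CReLU applied to the stacked $[\mtx W\vctx;\, -\mtx W\vctx]$ separates the signal cleanly into its positive and negative parts, which is exactly why the subtraction in the decoder reconstructs $\mtx W \vctx$ losslessly. The orthonormality assumption on $\mtx W$ then closes the argument; if $\mtx W$ were merely a tight frame one would need $\mtx W^T \mtx W = \mtx I$ which is a stronger condition than RIP or model-RIP, and this is precisely what motivates the more elaborate analysis in the next sections.
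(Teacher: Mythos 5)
Your proof is correct and follows essentially the same route as the paper's: apply $\relu$ to the stacked $[\mtx{W}\vctx;\,-\mtx{W}\vctx]$ to obtain the positive and negative parts $\vcthid_+,\vcthid_-$, recombine via $\mtx{\Psi}[\vcthid_+;\vcthid_-]=\mtx{W}^T(\vcthid_+-\vcthid_-)=\mtx{W}^T\mtx{W}\vctx$, and invoke orthonormality of $\mtx{W}$. No gaps; the closing remark about tight frames is a reasonable extra observation but not needed.
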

\begin{proof}
Because we have both the positive and the negative dot products of the signal with the basis vectors in $\relu( \mtx{\Psi}^{T} \vctx ) = \relu\left( \begin{bmatrix} \mtx{W} \vctx \\ -\mtx{W} \vctx \end{bmatrix} \right)$, we have positive and negative versions of the hidden units
$\vcthid_{+} = \relu( \mtx{W} \vctx )$ and $\vcthid_{-} = \relu(- \mtx{W} \vctx )$
where we decompose $\vcthid = \mtx{W} \vctx = \vcthid_{+} - \vcthid_{-}$ into the difference of two non-negative vectors, the positive and the negative entries of $\vcthid$.
From this decomposition, we can easily reconstruct the original signal via
\begin{align*}
    \mtx{\Psi} \begin{bmatrix} \vcthid_{+} \\ \vcthid_{-} \end{bmatrix}
  	 &=  \begin{bmatrix} \mtx{W}^{T} & - \mtx{W}^{T} \end{bmatrix} \begin{bmatrix} \vcthid_{+} \\ \vcthid_{-} \end{bmatrix}
  	 =  \mtx{W}^{T} ( \vcthid_{+} - \vcthid_{-} ) \\
 	 &=  \mtx{W}^{T} \vcthid
 	 =  \mtx{W}^{T} \mtx{W} \vctx 
 	 = \vctx. 
\end{align*}
\vspace*{-0.1in}
\end{proof}

In the example above, we have pairs of vectors $(\vct{w}, -\vct{w})$ in our matrix $\mtx{\Psi}$.
Now suppose that we have a vector $\vctz$ where its positive and negative components can be split into $\vctz = \vctz_+ - \vctz_-$, and that we synthesize a signal $\vctx$ from $\vctz$ using the matrix $\begin{bmatrix} \mtx{W}^{T} & - \mtx{W}^{T} \end{bmatrix}$.
Then, we have
\[
	\begin{bmatrix}
		\mtx{W}^T & -\mtx{W}^T \\
	\end{bmatrix}
	\begin{bmatrix}
		\vctz_+ \\
		\vctz_- \\
	\end{bmatrix}
	= \mtx{W}^T (\vctz_+ - \vctz_- ) = \mtx{W}^T \vctz = \vctx.
\]
Next, we multiply $\vctx = \mtx{W}^T \vctz$ by a concatenation of positive and negative $\mtx{W}$, then we get
$
	\begin{bmatrix}
		\mtx{W} \\ -\mtx{W} \\
	\end{bmatrix}
		\vctx
	=
	\begin{bmatrix}
		\mtx{W} \mtx{W}^T \vctz \\
		- \mtx{W} \mtx{W}^T \vctz
	\end{bmatrix}
$
and if we apply $\relu$ to this vector, we get
$ 
	\begin{bmatrix}
		(\mtx{W} \mtx{W}^T \vctz)_+ \\
		(\mtx{W} \mtx{W}^T \vctz)_-
	\end{bmatrix}
$,
which is a vector $\mtx{W} \mtx{W}^T \vctz$ that is split into its positive and negative components.
The structure of the product $\mtx{W} \mtx{W}^T$ is crucial to the reconstruction quality of the vector $\vctz$. In addition, this calculation shows that if we have both positive and negative pairs of filters or vectors, then the $\relu$ function applied to both the positive and negative dot products simply splits the vector into the positive and negative components.
These components are then reassembled in the next computation.
For this reason, in the analysis in the following sections, it is sufficient to 
assume
that all of the entries in the vectors are real numbers, rather than only non-negative.
 
\cutsubsectionup
\subsection{Model-RIP and Random Filters} \label{sec:model-rip}
\cutsubsectiondown

Our first main result shows that if we use Gaussian random filters in our CNN, then, with high probability, $\mtx{W}^T$, the transpose of a large matrix formed by the convolution filters satisfies the model-RIP.
In other words, Gaussian random filters generate a matrix whose transpose $\mtx{W}^T$ is almost an orthonormal transform for sparse signals with a particular sparsity pattern (that is consistent with our pooling procedure).
The bounds in the theorem tell us that we must balance the size of the filters $\ell$ and the number of channels $M$ against the sparsity of the hidden units $k$, the number of the filter banks $K$, the number of shifts $n$, the distortion parameter $\delta_k$, and the failure probability $\epsilon$.
The proof is in Appendix~\ref{sec:modelRIP}.

\begin{theorem}
\label{thm:modelRIP}
	Assume that we have $MK$ vectors $\vct{w}_{i,m}$ of length $\ell$ in which each entry is a scaled i.i.d. (sub-)Gaussian random variable with zero mean and unit variance (the scaling factor is $1/\sqrt{M\ell})$.
	Let $t$ be the stride length (where $n = (D - \ell)/t + 1$) and $\mtx{W}$ be a structured random matrix, which is the weight matrix of a single layer CNN with $M$ channels and input length $D$. If 
	\[
		\frac{M \ell^2}{D} \geq \frac{C}{\delta_k^2} \Big( k (\log(K) + \log(n)) - \log(\epsilon)\Big)
	\]
	for a positive constant $C$, then with probability $1 - \epsilon$, the $MD \times Kn$ matrix $\mtx{W}^T$ satisfies the model-RIP for model ${\cal M}_k$ with parameter $\delta_k$.
\end{theorem}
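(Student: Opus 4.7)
The plan is to follow the standard three-step recipe for RIP of a structured random matrix, specialized to the model-sparse setting of~\citet{Baraniuk:2010hg}: (i) for a \emph{fixed} model-$k$-sparse support, establish pointwise concentration of $\|\mtx{W}^{T}\vctz\|_{2}^{2}$ around $\|\vctz\|_{2}^{2}$ for a unit $\vctz$; (ii) upgrade to a uniform bound over the unit sphere of that support via an $\eta$-net and Lipschitz extension; (iii) take a union bound over all model-$k$-sparse supports. Because each $\vctz\in\mathcal{M}_{k}$ uses at most one shift out of $n$ in each of at most $k$ of the $K$ filter blocks, there are only $\binom{K}{k}n^{k}$ such supports; this is precisely what yields the $k(\log K+\log n)$ term in the theorem's hypothesis.

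The key structural reduction is to rewrite $\mtx{W}^{T}\vctz$ in terms of the \emph{independent} underlying filter vectors. Fix a support $\{(i,j_{i})\}_{i\in S}$ with $|S|=k$ and a unit $\vctz$ supported there. Then
\[
	\mtx{W}^{T}\vctz \;=\; \sum_{i\in S} z_{i}\, T_{j_{i}} \vct{f}_{i},
\]
where $\vct{f}_{i}\in\R^{M\ell}$ collects all $M$ channels of the $i$-th filter (entries i.i.d.\ scaled (sub-)Gaussians of variance $1/(M\ell)$) and $T_{j_{i}}:\R^{M\ell}\to\R^{MD}$ is the isometric embedding that deposits the filter at position $j_{i}t$ in each channel. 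The crucial point is that the model $\mathcal{M}_{k}$ allows at most one shift per filter block, so the $\vct{f}_{i}$ appearing in the sum are \emph{independent}. Hence the randomness lives in a single i.i.d.\ (sub-)Gaussian vector $\vct{g}\in\R^{kM\ell}$, and $\|\mtx{W}^{T}\vctz\|_{2}^{2}=(M\ell)^{-1}\vct{g}^{T}\mtx{A}\vct{g}$ with $\mtx{A}=\mtx{M}^{T}\mtx{M}$ and $\mtx{M}=[z_{1}T_{j_{1}}\mid\cdots\mid z_{k}T_{j_{k}}]$. Since $T_{j_{i}}^{T}T_{j_{i}}=I_{M\ell}$ and $\|\vctz\|_{2}^{2}=1$, one checks $\operatorname{tr}(\mtx{A})=M\ell$, whence $\E\|\mtx{W}^{T}\vctz\|_{2}^{2}=1$ as expected.

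For the tail I would apply the (sub-Gaussian) Hanson--Wright inequality, which controls $\vct{g}^{T}\mtx{A}\vct{g}$ by $\|\mtx{A}\|_{F}$ and $\|\mtx{A}\|_{op}$. Writing $\mtx{M}\mtx{M}^{T}=\sum_{i\in S}z_{i}^{2}\mtx{P}_{i}$ with $\mtx{P}_{i}=T_{j_{i}}T_{j_{i}}^{T}$ the rank-$M\ell$ orthogonal projection onto the support of shift $j_{i}$, the operator norm is immediate ($\|\mtx{A}\|_{op}\le 1$), while $\|\mtx{A}\|_{F}^{2}=\sum_{i,i'}z_{i}^{2}z_{i'}^{2}\operatorname{tr}(\mtx{P}_{i}\mtx{P}_{i'})$ with the overlap $\operatorname{tr}(\mtx{P}_{i}\mtx{P}_{i'})=M\max(0,\ell-t|j_{i}-j_{i'}|)$ encoding how much shifted filter supports overlap in $\R^{MD}$. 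The sparsity of each $T_{j_{i}}$'s image (only an $\ell/D$ fraction of the $MD$ coordinates) is what converts the crude $M\ell$ exponent into the sharper $M\ell^{2}/D$ of the theorem, yielding a pointwise tail $\Pr[\,|\|\mtx{W}^{T}\vctz\|_{2}^{2}-1|>\delta_{k}/2\,]\le 2\exp(-c\,\delta_{k}^{2}\cdot M\ell^{2}/D)$.

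Finally I would cover the unit sphere of each $k$-dimensional coordinate subspace by an $\eta$-net of cardinality $(1+2/\eta)^{k}$ (with, say, $\eta=\delta_{k}/4$), pass from net to sphere by the standard RIP blow-up argument, and union bound over the $\binom{K}{k}n^{k}$ model supports; balancing the total failure probability against $\epsilon$ and taking logarithms recovers exactly the stated condition on $M\ell^{2}/D$. I expect the main obstacle to be the concentration step---specifically, the bookkeeping of the overlap traces $\operatorname{tr}(\mtx{P}_{i}\mtx{P}_{i'})$ and of the ambient sparsity needed to extract the $\ell/D$ factor, since in the worst case (all shifts coinciding) one has $\operatorname{tr}(\mtx{P}_{i}\mtx{P}_{i'})=M\ell$ so some care is required to argue that this worst case does not spoil the union bound (which it does not, since $M\ell^{2}/D\le M\ell$ and the theorem's hypothesis is the stricter of the two). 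The net and union-bound steps are then routine.
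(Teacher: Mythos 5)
Your proposal is correct and arrives at the stated condition, but by a genuinely different route from the paper's. The paper works coordinate-by-coordinate in the \emph{output}: it shows each entry $\vct{y}_i$ of $\vct{y}=\mtx{W}^T\vctz$ is sub-Gaussian with $\|\vct{y}_i\|_{\psi_2}^2 \lesssim \|\vctz\|_2^2/(M\ell)$, applies a weighted Bernstein-type inequality for sums of squares of sub-Gaussians (Vershynin's Proposition 5.16) with the crude estimate $\sum_i \|\vct{y}_i\|_{\psi_2}^4 \lesssim D\|\vctz\|_2^4/(M\ell^2)$ --- which is where its $M\ell^2/D$ exponent comes from --- and then union-bounds over the $n^k\binom{K}{k}$ supports. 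You instead pull the randomness back to the \emph{filter} domain, writing $\|\mtx{W}^T\vctz\|_2^2=(M\ell)^{-1}\vct{g}^T\mtx{A}\vct{g}$ as a quadratic form in the $kM\ell$ independent filter entries and invoking Hanson--Wright with $\|\mtx{A}\|_{op}\le 1$ and $\|\mtx{A}\|_F^2\le M\ell$. Your route buys two things. First, it makes explicit (via the independence of the $\vct{f}_i$ and the overlap traces $\operatorname{tr}(\mtx{P}_i\mtx{P}_{i'})$) the dependence structure that the paper's argument silently relies on: the paper's concentration theorem requires the $\vct{y}_i$ to be independent, which is true only because ${\cal M}_k$ permits at most one shift per filter block so that distinct output coordinates involve disjoint filter entries --- a fact the paper never states. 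Second, you include the $\eta$-net over the unit sphere of each $k$-dimensional support subspace, which the paper omits entirely (its union bound runs over supports only, while its concentration statement holds for a single fixed $\vctz$, so strictly the net is needed to get RIP over the whole subspace). One sentence of yours is backwards as a derivation: the worst-case bound $\|\mtx{A}\|_F^2\le M\ell$ gives the \emph{stronger} pointwise tail $\exp(-c\delta_k^2 M\ell)$, and accounting for the $\ell/D$ sparsity of the images can only improve this further; you never actually obtain an $M\ell^2/D$ exponent from your quantities. But as your final paragraph correctly observes, $M\ell \ge M\ell^2/D$, so the theorem's hypothesis implies the weaker condition your tail requires --- this is a cosmetic slip in the narrative, not a gap in the proof.
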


We also note that the same analysis can be applied to the sum of two model-$k$-sparse signals, with changes in the constants (that we do not track here).
\begin{corollary}
Random matrices with the CNN structure satisfy, with high probability, the model-RIP for ${\cal M}^2_k$.
\end{corollary}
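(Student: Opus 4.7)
The plan is to derive the corollary directly from the scheme of Theorem~\ref{thm:modelRIP}, by showing that $\mathcal{M}^2_k$ is itself a finite union of low-dimensional coordinate subspaces whose cardinality (after taking logarithms) is of the same order as that covering $\mathcal{M}_k$, so that the same concentration-plus-union-bound argument carries over with $k$ replaced by $2k$ and $\delta_k$ replaced by $\delta_{2k}$.

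First I would upper bound the number of candidate subspaces. Any $\vctz = \alpha_1 \vctz_1 + \alpha_2 \vctz_2 \in \mathcal{M}^2_k$ touches at most $2k$ of the $K$ length-$n$ blocks, with at most two nonzero positions per active block, so $\mathcal{M}^2_k$ is covered by at most
\[
    \binom{K}{2k}\,(n + n^2)^{2k}
\]
coordinate subspaces of $\mathbb{R}^{Kn}$, each of dimension at most $2k$. Taking logarithms gives $O\bigl(k(\log K + \log n)\bigr)$, matching (up to a factor of $2$ on $k$) the quantity that drives the bound in Theorem~\ref{thm:modelRIP}.

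Second, I would reuse the per-subspace concentration step that underpins Theorem~\ref{thm:modelRIP}. On a single fixed subspace of dimension at most $2k$, the same net argument together with the sub-Gaussian tail bound on $\|\mtx{W}^T\vctz\|_2^2$ about $\|\vctz\|_2^2$ yields a $(1 \pm \delta_{2k})$ norm-preservation guarantee, except with probability exponentially small in $(M\ell^2/D)\,\delta_{2k}^2$. A union bound over the subspace family above then delivers model-RIP for $\mathcal{M}^2_k$ under the condition
\[
    \frac{M\ell^2}{D} \;\geq\; \frac{C'}{\delta_{2k}^2}\Bigl(2k(\log K + \log n) - \log \epsilon\Bigr)
\]
for an absolute constant $C'$, which is qualitatively the same hypothesis as in Theorem~\ref{thm:modelRIP} and therefore implies the corollary once the constant is enlarged to absorb the factor of $2$ on $k$.

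The main obstacle, already dealt with inside the proof of Theorem~\ref{thm:modelRIP}, is the strong correlation between rows of $\mtx{W}$ arising from distinct shifts of the same random filter; a black-box i.i.d.\ Gaussian RIP theorem cannot be invoked directly. For the corollary, however, no new tool beyond the existing proof is needed: the concentration argument is applied subspace by subspace, and the only change is that the union bound now ranges over a slightly larger family of subspaces of slightly larger dimension. Both enlargements are absorbed by the substitution $k \to 2k$ in the bound, completing the plan.
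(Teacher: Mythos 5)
Your proposal is correct and follows essentially the same route as the paper, which dispatches this corollary with the single remark that ``the same analysis can be applied to the sum of two model-$k$-sparse signals, with changes in the constants'': you simply make that remark precise by counting the coordinate subspaces covering ${\cal M}^2_k$ (each of dimension at most $2k$, with $\log$-cardinality still $O(k(\log K + \log n))$) and rerunning the concentration-plus-union-bound argument of Theorem~\ref{thm:modelRIP} with $k \to 2k$ and $\delta_k \to \delta_{2k}$. No new ideas are needed and none are claimed; your version is, if anything, slightly more careful than the paper's in spelling out the subspace count.
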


Other examples of matrices that satisfy the model-RIP 
include wavelets and localized Fourier bases;
both examples can be easily and efficiently implemented via convolutions.

\cutsubsectionup
\subsection{Reconstruction Bounds}
\cutsubsectiondown

Suppose $\mtx{W}^T$ satisfies the model-RIP and $\vcthatz$ is the reconstruction of true sparse code $\vctz$ through a CNN layer followed by pooling, i.e., $\vcthatz=\mathbb{M}(\mtx{W}\vctx,k)$.
Then, Theorem~\ref{thm:reconstruction} shows that 
$\vcthatx = \mtx{W}^T \vcthatz$ is an approximate reconstruction of the input signal, and the relative error is bounded on a function of the distortion parameters of the model-RIP.
\begin{theorem} 
\label{thm:reconstruction}
	We assume that $\mtx{W}^T$ satisfies the ${\cal M}_k^2$-RIP with constant $\delta_k \leq \delta_{2k} < 1$.
	If we use $\mtx{W}$ in a single layer CNN both to compute the hidden units $\vcthatz$ and to reconstruct the input $\vctx$ from these hidden units as $\vcthatx$ so that $\vcthatx=\mtx{W}^T\mathbb{M}(\mtx{W}\vctx,k)$, the error in our reconstruction is  
	\[
		\|\vcthatx - \vctx\|_2 \leq \frac{5 \delta_{2k}}{1 - \delta_{k}}
					\frac{\sqrt{1 + \delta_{2k}}}{\sqrt{1 - \delta_{2k}}}  \|\vctx\|_2.
	\]
\end{theorem}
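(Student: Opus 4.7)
The plan is to decompose the reconstruction error into three pieces, each controlled by a different consequence of the model-RIP. First, since $\vctx = \mtx{W}^{T}\vctz$ with $\vctz \in \mathcal{M}_{k}$ and $\vcthatz \in \mathcal{M}_{k}$, write
\[
    \vcthatx - \vctx \;=\; \mtx{W}^{T}\vcthatz - \mtx{W}^{T}\vctz \;=\; \mtx{W}^{T}(\vcthatz - \vctz),
\]
and observe that $\vcthatz - \vctz \in \mathcal{M}_{k}^{2}$ by construction. Applying the upper half of the $\mathcal{M}_{k}^{2}$-RIP \eqref{eq:model-rip} gives
$\|\vcthatx - \vctx\|_{2} \le \sqrt{1+\delta_{2k}}\;\|\vcthatz-\vctz\|_{2}$, which handles the $\sqrt{1+\delta_{2k}}$ factor in the statement.

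Second, the hard part: bound $\|\vcthatz - \vctz\|_{2}$ by one iteration of model-based IHT started from zero. Set $\vct{b} = \mtx{W}\vctx = \mtx{W}\mtx{W}^{T}\vctz$, and let $\Gamma_{1}=\operatorname{supp}(\vctz)$, $\Gamma_{2}=\operatorname{supp}(\vcthatz)$, $T_{0}=\Gamma_{1}\cup\Gamma_{2}\in\mathcal{M}_{k}^{2}$. The near-optimality of $\mathbb M$ says $\|\vct{b}-\vcthatz\|_{2}\le\|\vct{b}-\vctz\|_{2}$; since both $\vcthatz$ and $\vctz$ vanish outside $T_{0}$, a Pythagorean split yields $\|P_{T_{0}}(\vct{b}-\vcthatz)\|_{2}\le\|P_{T_{0}}(\vct{b}-\vctz)\|_{2}$, and the triangle inequality then gives
\[
    \|\vcthatz-\vctz\|_{2} \;\le\; 2\,\|P_{T_{0}}(\mtx{W}\mtx{W}^{T}-I)\vctz\|_{2}.
\]
The key RIP consequence I would use is that for any $\vct{w}$ supported on $T_{0}\in\mathcal{M}_{k}^{2}$ and any $\vctz\in\mathcal{M}_{k}$ (with $\Gamma_{1}\subseteq T_{0}$, so $\vct{w}\pm\vctz$ still lie in $\mathcal{M}_{k}^{2}$), the usual polarization identity applied to the $\mathcal{M}_{k}^{2}$-RIP yields $|\langle \vct{w},(\mtx{W}\mtx{W}^{T}-I)\vctz\rangle|\le \delta_{2k}\|\vct{w}\|_{2}\|\vctz\|_{2}$, i.e.\ $\|P_{T_{0}}(\mtx{W}\mtx{W}^{T}-I)\vctz\|_{2}\le\delta_{2k}\|\vctz\|_{2}$. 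A small refinement—replacing $\vctz$ on the right by the current iterate using the restricted near-invertibility $\|(P_{T_0}\mtx{W}\mtx{W}^{T}P_{T_0})^{-1}\|\le (1-\delta_{k})^{-1}$ obtained from the $\mathcal{M}_{k}$-RIP, plus an error-absorption step—produces the factor $\tfrac{1}{1-\delta_{k}}$ that multiplies $\delta_{2k}$ in the theorem, with the numerical constant $5$ emerging from combining the Pythagorean step, the triangle inequality, and this inversion.

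Third, translate $\|\vctz\|_{2}$ into $\|\vctx\|_{2}$. Since $\vctz\in\mathcal{M}_{k}\subseteq\mathcal{M}_{k}^{2}$ and $\vctx=\mtx{W}^{T}\vctz$, the lower half of the $\mathcal{M}_{k}^{2}$-RIP gives $\|\vctz\|_{2}\le (1-\delta_{2k})^{-1/2}\,\|\vctx\|_{2}$, which supplies the remaining $1/\sqrt{1-\delta_{2k}}$ factor. Chaining the three bounds produces
$\|\vcthatx-\vctx\|_{2}\le \sqrt{1+\delta_{2k}}\cdot \tfrac{5\delta_{2k}}{1-\delta_{k}}\cdot \tfrac{1}{\sqrt{1-\delta_{2k}}}\|\vctx\|_{2}$, as claimed.

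The main obstacle I anticipate is tightening step~2 to extract precisely the constants $5$ and $1-\delta_{k}$ rather than the slightly crisper $2\delta_{2k}$ that a naive one-step IHT argument alone produces. Handling this cleanly requires being careful that the support $T_{0}$ we work over lies in $\mathcal{M}_{k}^{2}=\mathcal{M}_{2k}$ (so that only $\delta_{2k}$, not $\delta_{3k}$, appears when the polarization identity is applied), and also that the implicit inversion of $P_{T_0}\mtx{W}\mtx{W}^{T}P_{T_0}$ is carried out on a subspace where the $\mathcal{M}_{k}$-RIP—rather than the weaker $\mathcal{M}_{k}^{2}$ version—provides the sharper $(1-\delta_{k})^{-1}$ spectral lower bound.
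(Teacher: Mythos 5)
Your argument is essentially correct, but it takes a genuinely different route from the paper. The paper follows the model-CoSaMP-style analysis of Baraniuk et al.: it first proves an \emph{Identification} lemma bounding the energy of $\vctz$ missed by the pooling support $\Omega$ (via the competition $\|\vcthid-\vcthid_\Omega\|_2\le\|\vcthid-\vcthid_\Pi\|_2$ between the selected and true supports, together with the cross-term bound $\|\mtx{W}_\Omega\mtx{W}^T\vctz|_{\Omega^c}\|_2\le\delta_{2k}\|\vctz|_{\Omega^c}\|_2$), and then an \emph{Estimation} lemma using $\|\mtx{I}-\mtx{W}_\Omega\mtx{W}_\Omega^T\|_2\le\delta_k$ to get $\|\vctz-\vcthatz\|_2\le\tfrac{5\delta_{2k}}{1-\delta_k}\|\vctz\|_2$; the outer $\sqrt{1+\delta_{2k}}$ and $1/\sqrt{1-\delta_{2k}}$ factors are attached exactly as you do. Your middle step instead uses the one-step-thresholding argument (near-optimality of $\mathbb M$, the Pythagorean split onto $T_0=\Gamma_1\cup\Gamma_2$, and the polarization consequence $\|P_{T_0}(\mtx{W}\mtx{W}^T-\mtx{I})\vctz\|_2\le\delta_{2k}\|\vctz\|_2$), and this is valid here: $T_0$ and $\vct{w}\pm\vctz$ do lie in ${\cal M}_k^2$, so only $\delta_{2k}$ appears. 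What your route buys is a \emph{sharper} intermediate bound, $\|\vcthatz-\vctz\|_2\le 2\delta_{2k}\|\vctz\|_2$, obtained in three lines rather than two lemmas; since $2\le 5/(1-\delta_k)$, this already implies the stated theorem. What the paper's route buys is the explicit support-identification statement $\|\vctz|_{\Omega^c}\|_2\le\tfrac{2\delta_{2k}}{1-\delta_k}\|\vctz\|_2$, which the authors use to interpret the pooling switches as capturing most of the code's energy.

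The one soft spot in your write-up is the ``small refinement'' by which you try to manufacture the exact constants $5$ and $(1-\delta_k)^{-1}$: there is no ``current iterate'' in a single-step argument started from zero, and inverting $P_{T_0}\mtx{W}\mtx{W}^TP_{T_0}$ plays no role in your chain. That paragraph should simply be deleted --- your clean bound of $2\delta_{2k}\|\vctz\|_2$ dominates the theorem's $\tfrac{5\delta_{2k}}{1-\delta_k}\|\vctz\|_2$ and needs no further adjustment. Do also state explicitly the two hypotheses both proofs rely on: that $\vctx=\mtx{W}^T\vctz$ exactly for some $\vctz\in{\cal M}_k$ (the generative assumption from Section~\ref{sec:analysis}), and that $\mathbb M(\cdot,k)$ returns the best ${\cal M}_k$-approximation in $\ell_2$, which is what licenses $\|\vct{b}-\vcthatz\|_2\le\|\vct{b}-\vctz\|_2$.
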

See Appendix~\ref{sec:reconstruct} for the detailed proofs.
Part of our analysis also shows that the hidden units $\vcthatz$ are approximately the putative coefficient vector $\vctz$ in the sparse linear representation for the input signal.
Recall that the structured sparsity approximation algorithm $\mathbb{M}$ includes the downsampling caused by pooling and an unsampling operator. 
Theorem~\ref{thm:reconstruction} is applicable to any type of upsampling switches, so our reconstruction bound is generic to the particular design choice on how to recover the activation size in a decoding neural network.
We can extend the analysis for a single layer CNN to multiple layer CNN by using the output on one layer as the input to another, following the proof in Appendix~\ref{sec:reconstruct}. 
We leave further investigation of this idea as future work.

%since only the degree of sparsity matters, even upsampling with random switches is fine.
 \cutsectionup
\section{Experimental Evidence and Analysis} \label{sec:experiments}
\cutsectiondown

In this section, we provide experimental validation of our theoretical model and analysis. 
We first validate the practical relevance of our assumption by examining the effectiveness of random filter CNNs,
and then provide results on more realistic scenarios.
In particular, we study popular deep CNNs trained for image classification on ILSVRC 2012 dataset~\citep{imagenet}.
We calculate empirical model-RIP bounds for $\mtx{W}^T$, showing that they are consistent with our theory. 
Our results are also consistent with a long line of research shows that it is reasonable to model real and natural images as sparse linear combinations overcomplete dictionaries \citep{boureau2008sparse,le2013building,lee2008sparse,olshausen1996emergence,ranzato2007unsupervised,yang2010image}. 
In addition, we verify our theoretical bounds for the reconstruction error $\| \vctx - \mtx{W}^T \vcthatz \|_2/\| \vctx\|_2$ on real images.
We investigate both randomly sampled filters and empirically learned filters in these experiments.  
Our implementation is based on Caffe~\citep{caffe} and MatConvNet~\citep{matconvnet}. 

Recall that our theoretical analysis is generic to any upsampling switches in \eqref{eq:sparse-approx} for reconstruction. 
In the experiments, we specifically use the naive upsampling to reverse max-pool activations to its original size, where only the first element in a pooling region is assigned with the pooled activation, and the rest elements are all zero. 
Thus, no extra information other than the pooled activation values are taken into account.

\textgray{The reconstruction error is the relative $l_2$ distance between the original image and the reconstruction. It is the metric we use in our theoretical and empirical analysis. A smaller value is better. Fig.4 and Fig.5(supp) show the visual quality. -> well described in the paragraph above\\
A number of works have shown that natural images are well approximated by sparse coding model. We benchmark initially one part of our model with random sparse signals and further test the model with real images. -> well described in the paragraph above
}

\cutsubsectionup
\subsection{Gaussian Random CNNs on CIFAR-10}
\label{sec:exp-randomfilters}
\cutsubsectiondown
To show the practical relevance of our theoretical assumptions on using random filters for CNNs as stated in Section~\ref{sec:relatedwork-randomfilters}, we evaluate simple CNNs with Gaussian random filters with i.i.d. zero mean unit variance entries on the CIFAR-10~\citep{cifar10}.
Note that the goal of this experiment is not to achieve state-of-the-art results, but to examine practical relevance of our assumption on random filter CNNs.
Once the CNNs weights are initialized (randomly), they are fixed during the training of the classifiers.\footnote{Implementation detail: we add a batch normalization layer together with a learnable scale and bias before the activation so that we do not need to tune the scale of the filters. See Appendix~\ref{sec:cifar-random-filters-detailed} for more details.}
Specifically, we test random CNNs with 1, 2, and 3 convolutional layers followed by ReLU activation and $2 \times 2$ max pooling layer.
We tested different filter sizes ($3,5,7$) and numbers of channels ($64,128,256,1024,2048$) and report the best classification accuracy 
by cross-validation in Table~\ref{tab:cifar-random-filters}.
We also report the best performance using learnable filters for comparison.
More details about the architectures can be found in Appendix~\ref{sec:cifar-random-filters-detailed}.
We observe that CNNs with Gaussian random filters achieve 
good classification performance (implying that they serve as reasonable representation of input data), 
which is not too far off the learned filters.
Our experimental results are also consistent with the observations made by \citet{jarrett2009best} and \citet{saxe2011random}.
In conclusion, those results suggest that CNNs with Gaussian random filters might be a reasonable setup which is amenable to mathematical analysis while not being too far off in terms of practical relevance.

\begin{table}
\begin{centering}
\begin{footnotesize}
\begin{tabular}{c||c|c|c}
\hline 
Method & 1 layer & 2 layers  & 3 layers \tabularnewline
\hline 
Random filters & 66.5\%  & 74.6\%  & 74.8\%\tabularnewline
\hline 
Learned filters & 68.1\%  & 83.3\%  & 89.3\% \tabularnewline
\hline 
\end{tabular}
\end{footnotesize}
\par\end{centering}
\caption{Classification accuracy of CNNs with random and learnable filters on CIFAR-10.  A typical layer consists of four operators: convolution, ReLU, batch normalization and max pooling. Networks with optimal filter size and numbers of output channels are used. (See Appendix~\ref{sec:cifar-random-filters-detailed} for more details about the architectures). The random filters, assumed in our theoretical analysis, perform reasonably well, not far off the learned filters. }
\label{tab:cifar-random-filters}
\end{table}

\begin{figure}
\begin{centering}
\hfill{}\subfloat[$\| \mtx{W}^T \vctz\|_2 /  \|\vctz\|_2$]{\includegraphics[width=0.33\columnwidth]{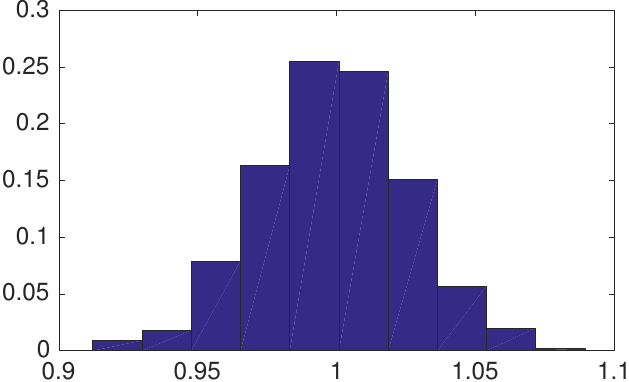}}
\hfill{}\subfloat[{\scriptsize $\| \mtx{W} \mtx{W}^T \vctz\|_2 /  \|\vctz\|_2$}]{\includegraphics[width=0.33\columnwidth]{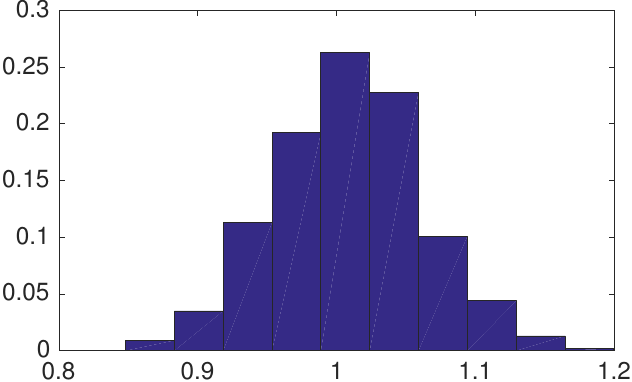}}
\hfill{}\subfloat[$\|\vcthatx - \vctx \|_2 /  \|\vctx\|_2$]{\includegraphics[width=0.33\columnwidth]{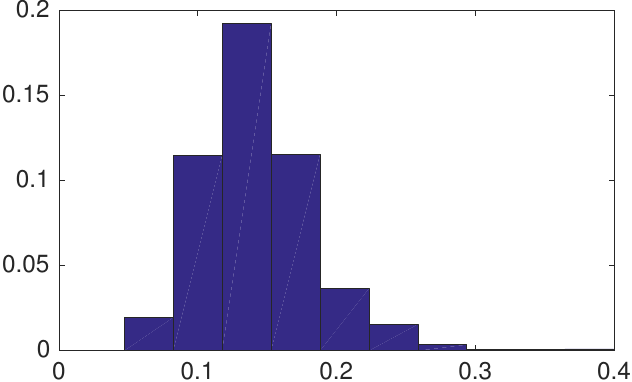}}
\hfill{}
\par\end{centering}
\caption{For 1-d scaled Gaussian random filters $\mtx{W}$, we plot the histogram of ratios (a) $\| \mtx{W}^T \vctz\|_2 /  \|\vctz\|_2$ (model-RIP in \eqref{eq:model-rip}; supposed to be concentrated at $1$), (b) $\| \mtx{W} \mtx{W}^T \vctz\|_2 /  \|\vctz\|_2$ 
(ratio between the norm of the reconstructed code $\mtx{W} \mtx{W}^T \vctz$ and that of the original code $\vctz$; supposed to be concentrated at $1$), and
(c) $\|\vcthatx - \vctx \|_2 /  \|\vctx\|_2$ (reconstruction bound in Theorem~\ref{thm:reconstruction}, supposed to be small),  where $\vctz$ is a ${\cal M}_k$ sparse signal that generates the vector $\vctx$ and $\vcthatx=\mtx{W}^T\mathbb{M}(\mtx{W}\vctx,k)$ is the reconstruction of $\vctx$, where we use the naive unsampling to recover the reduced dimension due to pooling:
we place recovered values in the top-left corner in each unsampled block.
(See Section~\ref{sec:comp-sense}).}
\label{fig:1D_sanity_check}
\end{figure}

\cutsubsectionup
\subsection{1-d Model-RIP} \label{sec:1d-exp}
\cutsubsectiondown

We use 1-d synthetic data to empirically show the basic validity of our theory in terms of the model-RIP in \eqref{eq:model-rip} and reconstruction bound in Theorem~\ref{thm:reconstruction}. 
We plot the histograms of the empirical model-RIP values of 1-d Gaussian random filters $\mtx{W}$ ( scaled by $1/\sqrt{lM}$ ) with size $l\times1\times M\times K = 5\times1\times32\times96$ on 1-d ${\cal M}_k$ sparse signal $\vctz$ with size $D=32$ and sparsity $k=10$, whose non-zero elements are drawn from a uniform distribution on $[-1,1]$.   
The histograms in Figure~\ref{fig:1D_sanity_check}~(a)--(b) are tightly centered around $1$, suggesting that $\mtx{W}^T$ satisfies the model-RIP in \eqref{eq:model-rip} and its corollary from Lemma~\ref{lma:rip-bound}, respectively.
We also empirically show the reconstruction bound in Theorem~\ref{thm:reconstruction} on synthetic vectors $\vctx=\mtx{W}^T\vctz$ (Figure~\ref{fig:1D_sanity_check}~(c)).
The reconstruction error is concentrated at around $0.1$--$0.2$ and bound under $0.5$. 
Results in Figure~\ref{fig:1D_sanity_check} suggest the practical validity of our theory when the model assumptions hold.

\cutsubsectionup
\subsection{Architectures for 2-d Model-RIP}
\cutsubsectiondown

We conduct the rest of our experimental evaluations on the 16-layer VGGNet (Model D in \citet{vggnet}),
where the computation is carried out on images;
e.g., convolution with a 2-d filter bank and pooling on square regions. 
In contrast to the theory, the realistic network does not pool activations over all the possible shifts for each filter, but rather on non-overlapping patches.
The networks are trained for the large-scale image classification task, which is important for extending to other supervised tasks in vision. 
The main findings on VGGNet are presented in the rest of this section;
we also provide some analysis on AlexNet~\citep{alexnet} in Appendix~\ref{sec:alexnet_sparsity}.

VGGNet contains five macro layers of convolution and pooling layers, and each macro layer has 2 or 3 convolutional layers followed by a pooling layer. 
We denote the $j$-th convolutional layer in the $i$-th macro layer ``conv$(i,j)$,'' and the pooling layer ``pool$(i)$.''
The activations/features from $i$-th macro layer are the output of pool$(i)$.
Our analysis is for single convolutional layers.

\cutsubsectionup
\subsection{2-d Model-RIP}
\label{sec:exp-2d-rip}
\cutsubsectiondown

The key to our reconstruction bound is Theorem~\ref{thm:reconstruction}.
We empirically evaluate the model-RIP, i.e., $\Vert \mtx{W}^T\vctz \Vert / \Vert \vctz \Vert$, for real CNN filters of the pretrained VGGNet. 
We use two-dimensional coefficients 
$\vctz$ (each block of coefficients is of size $D \times D$), $K$ filters of size $\ell \times \ell$, and pool the coefficients over smaller pooling regions (i.e., not over all possible shifts of each filter). 
The following experimental evidence suggests that the sparsity model and the model-RIP of the filters are consistent with our mathematical analysis on the simpler one-dimensional case.

To check the significance of the model-RIP (i.e., how close $\Vert \mtx{W}^T\vctz \Vert / \Vert \vctz \Vert  $ is to $1$) in controlled settings, we first synthesize the hidden activations $\vctz$ with sparse uniform random variables, which fully agree with our model assumptions.

\begin{table}
	\begin{small}
		\begin{center}
						\begin{tabular}{>{\colcut}c<{\colcut}||>{\colcut}c<{\colcut}|>{\colcut}c<{\colcut}|>{\colcut}c<{\colcut}|>{\colcut}c<{\colcut}|>{\colcut}c<{\colcut}|>{\colcut}c<{\colcut}}
				\hline
				layer & c(1,1) & c(1,2) & p(1) & c(2,1) & c(2,2) & p(2) \\
				\hline
				\% of non-zeros & 49.1 & 69.7 & 80.8 & 67.4 & 49.7 & 70.7 \\
				\hline
				\hline
				layer & c(3,1) & c(3,2) & c(3,3) & p(3) & c(4,1) & c(4,2) \\
				\hline
				\% of non-zeros & 53.4 & 51.9 & 28.7 & 45.9 & 35.6 & 29.6 \\
				\hline
				\hline
				layer & c(4,3) & p(4) & c(5,1) & c(5,2) & c(5,3) & p(5) \\
				\hline
				\% of non-zeros & 12.6 & 23.1 & 23.9 & 20.6 & 7.3 & 13.1 \\
				\hline
			\end{tabular}
		\end{center}
	\end{small}
	\caption{Layer-wise sparsity of VGGNet on ILSVRC 2012 validation set.
		``c'' stands for convolutional layers and ``p'' represents pooling layers.
		CNN with random filters in Section~\ref{sec:exp-2d-rip} can be simulated with the same sparsity. }
	\label{tab:vgg_sparsity}
\end{table}

\begin{table}[t]\begin{small}
\begin{center}
\begin{tabular}{>{\colcut}c<{\colcut}||>{\colcut}c<{\colcut}|>{\colcut}c<{\colcut}|>{\colcut}c<{\colcut}|>{\colcut}c<{\colcut}|>{\colcut}c<{\colcut}|>{\colcut}c<{\colcut}|>{\colcut}c<{\colcut}}
\hline
layer & (1,1) & (1,2) & (2,1) & (2,2) & (3,1) & (3,2) & (3,3)  \\
\hline
learned & 0.943 & 0.734 & 0.644 & 0.747 & 0.584 & 0.484 & 0.519 \\
\hline
random & 0.670 & 0.122 & 0.155 & 0.105 & 0.110 & 0.090 & 0.080 \\
\hline
\hline
layer & (4,1) & (4,2) & (4,3) & (5,1) & (5,2) & \multicolumn{1}{c}{(5,3)} \\
\cline{1-7}
learned & 0.460 & 0.457 & 0.404 & 0.410 & 0.410 & \multicolumn{1}{c}{0.405} \\
\cline{1-7}
random & 0.092 & 0.062 & 0.062 & 0.070 & 0.067 & \multicolumn{1}{c}{0.067} \\
\cline{1-7}
\end{tabular}
\end{center}
\caption{Comparison of coherence between learned filters in each convolutional layer of VGGNet and Gaussian random filters with corresponding sizes.}
\label{tab:vgg_coherence}
\end{small}
\end{table}

The sparsity of $\vctz$ is constrained to the average level of the real CNN activations, which is reported in Table~\ref{tab:vgg_sparsity}.  
Given the filters of a certain convolutional layer, we use the synthetic $\vctz$ (in equal position to this layer's output activations) to get statistics for the model-RIP. 
To be consistent with succeeding experiments, we choose conv$(5,2)$, while other layers show similar results. 
Figure~\ref{fig:RIP_hidden_feature}~(a) summarizes the distribution of empirical model-RIP values, which is clearly centered around $1$ and satisfies \eqref{eq:model-rip} with a short tail roughly bounded by $\delta_k<1$. 
For more details of the algorithm, we normalize the filters from the conv$(5,2)$ layer, which are $\ell \times \ell$ ($\ell = 3$).
All $K=512$ filters with $M=512$ input channels are used.\footnote{We do not remove any filters including those in approximate positive/negative pairs (see Section~\ref{sec:analysis}.)}
We set $D = 15$, which is the same as the output activations of conv$(5,2)$, and use $2 \times 2$ pooling regions\footnote{No pooling layer follows conv$(5,2)$ in VGGNet.
However, we use it in this way to analyze the convolution-pooling pair per theory.}, which is commonly used in recent CNNs. 
We generate 1000 ${\cal M}_k$ randomly sampled sparse activation maps $\vctz$ by first sampling their non-zero supports and then filling elements on the supports uniformly from $[-1,1]$. The sparsity is the same as that in conv$(5,1)$ activations.
\commenthl{Are we using 262144 filters? Doesn't make sense to me.} 

\begin{figure}[t]
\begin{small}
\vspace*{-0.1in}
\begin{centering}
\hfill{}\subfloat[Random]{\includegraphics[width=0.33\columnwidth]{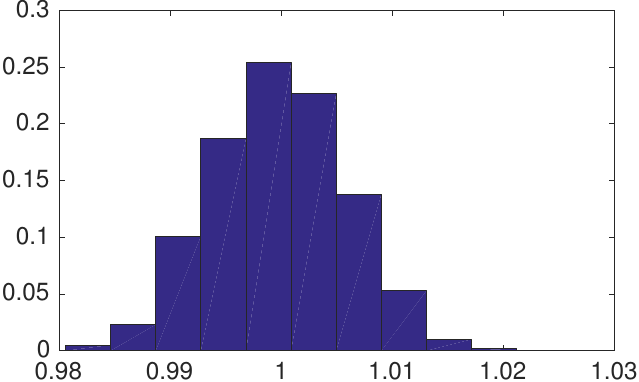}}
\hfill{}\subfloat[After $\relu$]{\includegraphics[width=0.33\columnwidth]{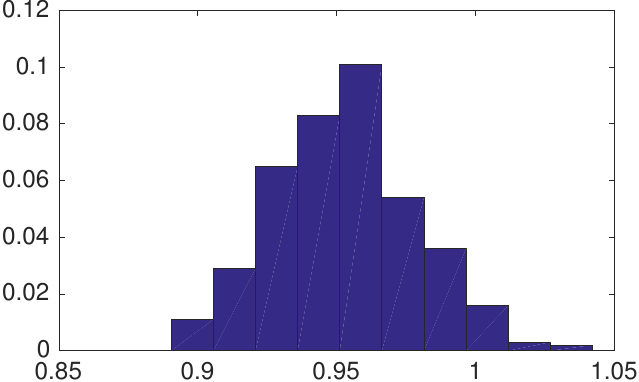}}
\hfill{}\subfloat[Before $\relu$]{\includegraphics[width=0.33\columnwidth]{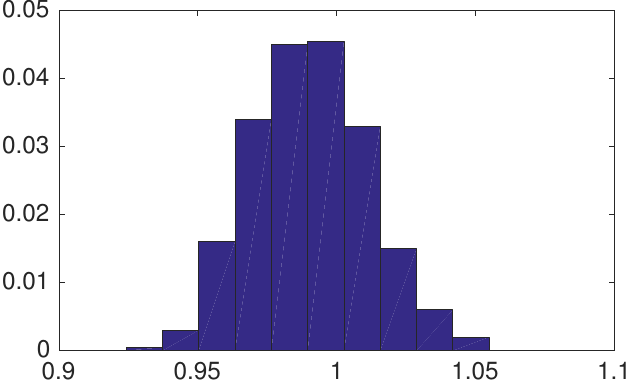}}
\hfill{}
\par\end{centering}
\caption{For VGGNet's conv$(5,2)$ filters $\mtx{W}$, we plot the histogram of ratios $\| \mtx{W}^T \vctz\|_2 /  \|\vctz\|_2$, which is expected to be concentrated at $1$ according to \eqref{eq:model-rip}, where $\vctz$ is a ${\cal M}_k$ sparse signal.
In (a), $\vctz$ is randomly generated with the same sparsity as the conv$(5,2)$ activations and from a uniform distribution for the non-zero magnitude.
In (b) and (c),
$\vctz$ is recovered by Algorithm~\ref{alg:rip-act} from the conv(5,1) activations before and after applying $\relu$, respectively.
The learned filters admits similar model-RIP value distributions to the random filters except for a bit larger bandwidth, which means the model-RIP in \eqref{eq:model-rip} can empirically hold even when the filters do not necessarily subject to the i.i.d Gaussian random assumption. }
\label{fig:RIP_hidden_feature}
\end{small}
\end{figure}

\begin{algorithm}[t]
	\caption{\small Sparse hidden activation recovery}
	\label{alg:rip-act}
	\begin{small}
	\begin{algorithmic}[1]
		\REQUIRE convolution matrix $\mtx{W}$, input activation/image $\vctx$
		\ENSURE hidden code $\vctz$, satisfying our model-RIP assumption with $\mathcal{M}_k$ and reconstructing $\vctx$ with $\mtx{W}$
		\STATE $\vctz^{\mathrm{init}} = \argmin_{\vctz}{ {\Vert \vctx-\mtx{W}^T\vctz\Vert}^2_2 + \lambda { \Vert \vctz \Vert}_1 }$
		\STATE $\vctz^{\mathrm{model}} = \operatorname{up-sample}( \operatorname{max-pool}(\vctz^{\mathrm{init}}),\vct{s})$, \\
		where $\vct{s}=\operatorname{pool-switch}(\vctz)$ 
		\STATE $\vctz = \argmin_{\vctz}{ {\Vert \vctx-\mtx{W}^T\vctz\Vert}^2_2 + \lambda { \Vert \vctz \Vert}_1 }, $ \\
						$\mbox{s.t. } \vctz_i = 0 \mbox{ if } \vctz^{\mathrm{model}}_i = 0$
	\end{algorithmic}
	\end{small}
\end{algorithm}

More realistically, we observe that the actual conv$(5,2)$ activations from VGGNet are not necessarily drawn from a model-sparse uniform distribution. 
This motivates us to evaluate the empirical model-RIP on the hidden activations $\vctz$ that reconstruct the actual input activations $\vctx$ from conv$(5,1)$ by $\mtx{W}^T\vctz$. 
Per theory, the $\vctx$ is given by a max pooling layer, so we constrain the sparsity (i.e., the size of the support set is no more than $1$ in a pooling region for a single channel). 
We use a simple and efficient algorithm to recover $\vctz$ from $\vctx$ in Algorithm~\ref{alg:rip-act}.
The algorithm is inspired by ``$\ell_1$ heuristic" method that are commonly used in practice (e.g., ~\citet{boyd2015}).
As shown in Algorithm~\ref{alg:rip-act}, we first do $\ell_1$-regularized least squares without constraining the support set. 
Max pooling is then applied to figure out the support set for each pooling region. 
In particular, we use max pooling and unpooling with known switches (line~2)
to zero out the locally non-maximum values without messing up the support structures. 
We perform $\ell_1$-regularized least squares again on the fixed support set to recover the hidden activations satisfying the model sparsity. 
As shown in Figures~\ref{fig:RIP_hidden_feature}~(b)--(c), the empirical model-RIP values for visual activations $\vctx$ from conv$(5,1)$ with/without $\relu$ are both close to $1$. 
The center offset to $1$ is less than $0.05$ and the range bound $\delta_k$ is roughly less than $0.05$, which agrees with the theoretical bound in (\ref{eq:model-rip}). 
To gain more insight, we summarize the learned filter coherence in Table~\ref{tab:vgg_coherence} for all convolutional layers in VGGNet.\footnote{The coherence is defined as the maximum (in absolute value) dot product between distinct pairs of columns of the matrix $W^T$, i.e. $\mu=\max_{i\not=j}|W_i W_j^T|$, where $W_i$ denote the $i$-th row of matrix $W$.}
This measures the correlation or similarity between the columns of $W^T$ and is a proxy for the value of the model-RIP parameter $\delta_k$ (which we can only estimate computationally).
The smaller the coherence, the smaller $\delta_k$ is, and the better the reconstruction.
The coherence of the learned filters is not low, which is inconsistent with our theoretical assumptions.  
However, the model-RIP turns out to be robust to this mismatch. It demonstrates the strong practical invertibility of CNN.

\textgray{
Definition of coherence
We did not include the definition of coherence. Coherence is the maximum (in absolute value) dot product between distinct pairs of columns of the matrix $W^T$. This measures the correlation or similarity between the columns of $W^T$ and is a proxy for the value of the model-RIP parameter $\delta_k$ (which we can only estimate computationally). The smaller the coherence, the smaller $\delta_k$ is, and the better the reconstruction.
}

\cutsubsectionup
\subsection{Reconstruction Bounds}
\cutsubsectiondown

With model-RIP as a sufficient condition, Theorem~\ref{thm:reconstruction} provides a theoretical bound for layer-wise reconstruction via $\vcthatx=\mtx{W}^T\mathbb{M}(\mtx{W}\vctx)$, which consists of the projection and reconstruction in one IHT iteration. 
Without confusion, we refer to it as IHT for notational convenience. 
We investigate the practical reconstruction errors on 
pool$(1)$ to $(4)$ of VGGNet. 

\begin{figure}[t]
	\centering
	\includegraphics[width=1.0\columnwidth]{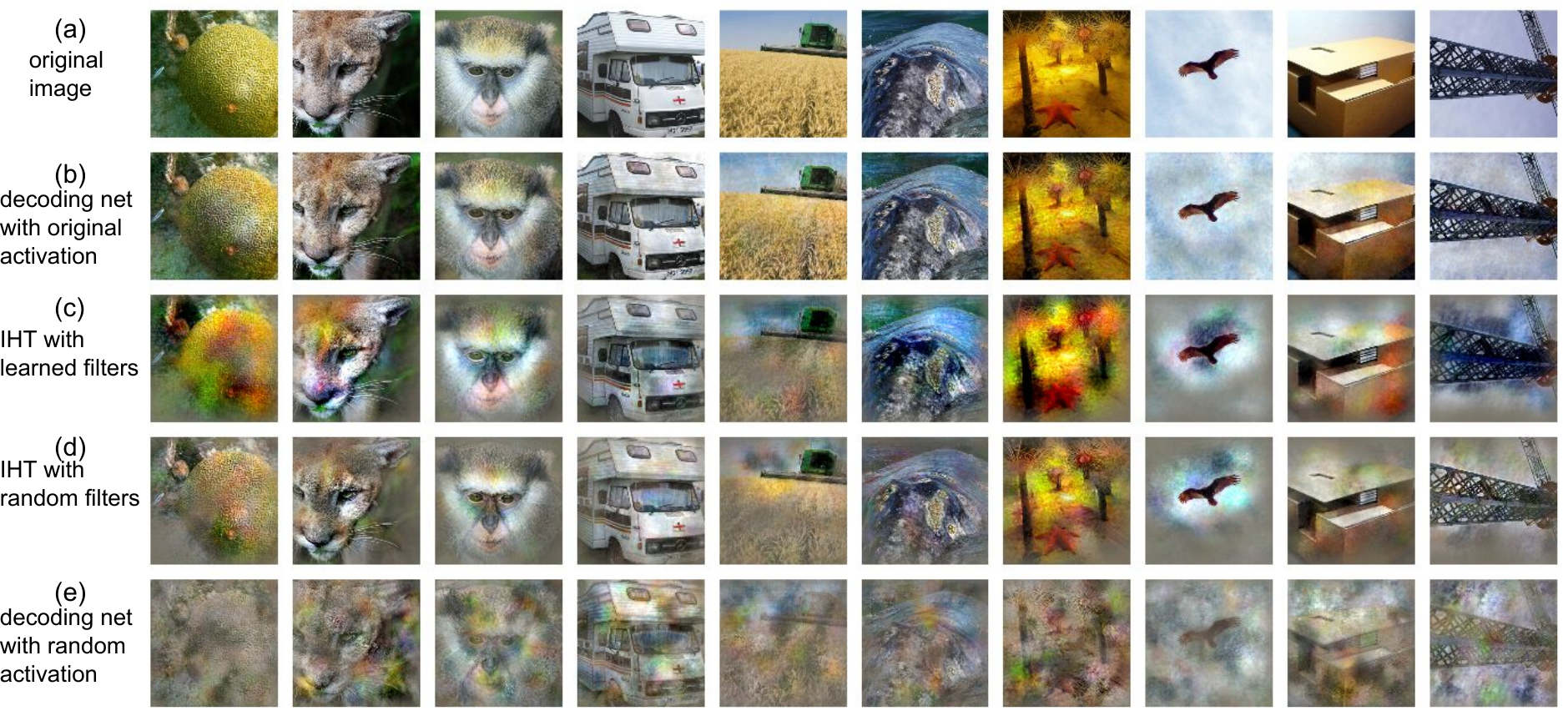}
	\caption{Visualization of images reconstructed by a pretrained decoding network with VGGNet's pool$(4)$ activation reconstructed using different methods: (a)~original image, (b)~output of the $5$-layer decoding network with original activation, (c) output of the decoding net with reconstructed activation by IHT with learned filters, (d) output of the decoding net with reconstructed activation by IHT with Gaussian random filters, and (e) output of the decoding net with Gaussian random activation.}
	\label{fig:recon_images}
\end{figure}

To encode and reconstruct intermediate activations of CNNs, we employ IHT with sparsity estimated from the real CNN activations on ILSVRC 2012 validation set (see Table~\ref{tab:vgg_sparsity}).  
We also reconstruct input images, since CNN inversion is not limited to a single layer, and images are easier to visualize than hidden activations.   
To implement image reconstruction, we project the reconstructed activations into the image space via a pretrained decoding network as in~\citet{deconv-recon}, which extends a similar autoencoder architecture as in \citet{invert-cnn} to a stacked ``what-where'' autoencoder~\citep{what-where}. 
The reconstructed activations were scaled to have the same norm as the original activations so that we can feed them into the decoding network. 

As an example, Figure~\ref{fig:recon_images} illustrates the image reconstruction results for the hidden activations of pool$(4)$.
Interestingly, the decoding network itself is quite powerful, since it can reconstruct the rough (although very noisy) glimpse of images 
with Gaussian random input, as shown in Figure~\ref{fig:recon_images}~(e). 
Object shapes are recovered up to some extent by using the pooling switches only in the ``what-where'' autoencoder. 
This result suggests that it is important to determine which pooling units are active and then to estimate these values accurately.
These steps are consistent with the steps in the inner loop of any iterative sparse signal reconstruction algorithm.

In Figure~\ref{fig:recon_images}~(c), we take the pretrained conv$(5,1)$ filters for IHT.
The images recovered from the IHT reconstructed pool$(4)$ activations are reasonable and the reconstruction quality is significantly better than the random input baseline. 
We also try Gaussian random filters (Figure~\ref{fig:recon_images}~(d)), which agree more with the model assumptions (e.g., lower coherence, see Table~\ref{tab:vgg_coherence}). The learned filters from VGGNet perform equally well visually. 
IHT ties the encoder and decoder weights (no filter learning for the decoder), so it does not perform as well as the decoding network trained with a huge batch of data (Figure~\ref{fig:recon_images}~(b)). 
Nevertheless, we show both theoretically and experimentally decent reconstruction bounds for these simple reconstruction methods on real CNNs. 
More visualization results for more layers are in Appendix~\ref{sec:recon-vgg}. 

\textgray{
The model along with our experimental analysis suggests that it is important to determine which pooling units are active and then to estimate these values accurately. These steps are consistent with the steps in the inner loop of any iterative sparse signal reconstruction algorithm.\\
Our reconstruction is mainly for analyzing the practical significance of the theory rather than beating [13]. -> is this really necessary to state in the main paper?
}

\begin{table}[t]\begin{small}
\begin{center}
\begin{tabular}{>{\colcutb}c<{\colcutb}||>{\colcutb}c<{\colcutb}|>{\colcutb}c<{\colcutb}|>{\colcutb}c<{\colcutb}|>{\colcutb}c<{\colcutb}|>{\colcutb}c<{\colcutb}|>{\colcutb}c<{\colcutb}}
\hline
\multirow{4}{*}{layer} & \multicolumn{3}{c|}{image space} & \multicolumn{3}{c}{activation space}\\
& \multicolumn{3}{c|}{relative error} & \multicolumn{3}{c}{relative error}\\
\hhline{~------}
& learned & random & random & learned & random & random \\
& filters & filters & activations & filters & filters & activations \\
\hline
1 & 0.423 & 0.380 & 0.610 & 0.895 & 0.872 & 1.414 \\
\hline
2 & 0.692 & 0.438 & 0.864 & 0.961 & 0.926 & 1.414 \\
\hline
3 & 0.326 & 0.345 & 0.652 & 0.912 & 0.862 & 1.414 \\
\hline
4 & 0.379 & 0.357 & 0.436 & 1.051 & 0.992 & 1.414 \\
\hline

\end{tabular}
\end{center}
\caption[]{Layer-wise relative reconstruction errors by different methods in activation space and image space between reconstructed and original activations. For macro layer $i$, we take its activation after pooling from that layer and reconstruct it with different methods (using learned filters from the layer above or scaled Gaussian random filters) and feed the reconstructed activation to a pretrained corresponding decoding network.\footnotemark}
\label{tab:recon_error}
\end{small}
\end{table}
\footnotetext{The values in the last column are identical ($=1.414$) for all layers because $\Vert\vct{f} - \hat{\vct{f}}\Vert/\Vert\vct{f}\Vert = \sqrt{2}$ on average for Gaussian random $\hat{\vct{f}}$ provided $\Vert\vct{f}\Vert = \Vert\hat{\vct{f}}\Vert$.}

In Table~\ref{tab:recon_error}, we summarize reconstruction performance for all 4 macro layers.
With random filters, the model assumptions hold and the IHT reconstruction is the best quantitatively.
IHT with real CNN filters performs comparable to the best case and much better than the baseline established by the randomly sampled activations.

 \cutsectionup
\section{Conclusion}
\cutsectiondown

We introduce three concepts that tie together a particular model of compressive sensing (and the associated recovery algorithms), the properties of learned filters, and the empirical observation that CNNs are (approximately) invertible.
Our experiments show that filters in trained CNNs are consistent with the mathematical properties we present while the hidden units exhibit a much richer structure than mathematical analysis suggests.
Perhaps simply moving towards a compressive, rather than exactly sparse, model for the hidden units will capture the sophisticated structure in these layers of a CNN or, perhaps, we need a more sophisticated model.
Our experiments also demonstrate that there is considerable information captured in the switch units (or the identities of the non-zeros in the hidden units after pooling) that no mathematical model has yet expressed or explored thoroughly. 
We leave such explorations as future work.
 
\section*{Acknowledgments}
This work was supported in part by ONR N00014-16-1-2928, NSF CAREER IIS-1453651, and Sloan Research Fellowship. We would like to thank Michael Wakin for helpful discussions about concentration of measure for structured random matrices.

\begingroup
\begin{small}
\setstretch{1}
\bibliographystyle{named}
\bibliography{ref,ref-from-yuting} 
\end{small}
\endgroup

\clearpage
\onecolumn
\appendix
\begin{center}
\LARGE Appendix: \\ { \renewcommand{\\}{} \mytitle }
\par\end{center}

\section{Mathematical Analysis: Model-RIP and Random Filters}
\label{sec:modelRIP}

\begin{reftheorem}{thm:modelRIP}{(Restated)}
	Assume that we have $MK$ vectors $\vct{w}_{i,m}$ of length $\ell$ in which each entry is a scaled i.i.d. (sub-)Gaussian random variable with zero mean and unit variance (the scaling factor is $1/\sqrt{M\ell})$.
	Let $t$ be the stride length (where $n = (D - \ell)/t + 1$) and $\mtx{W}$ be a structured random matrix, which is the weight matrix of a single layer CNN with $M$ channels and input length $D$.
	If 
	\[
		\frac{M \ell^2}{D} \geq \frac{C}{\delta_k^2} \Big( k (\log(K) + \log(n)) - \log(\epsilon)\Big)
	\]
	for a positive constant $C$, then with probability $1 - \epsilon$, the $MD \times Kn$ matrix $\mtx{W}^T$ satisfies the model-RIP for model ${\cal M}_k$ with parameter $\delta_k$.
\end{reftheorem}
\begin{proof}
We note that the proof follows the same structure of those in other papers such as \citet{Park:2011iw} and \citet{Vershynin:2010vk},
though we make minor tweaks to account for the particular structure of $\mtx{W}^T$. 

Suppose that $\vctz \in {\cal M}_k$, i.e., $\vctz$ consists of at most $k$ non-zero entries that each appears in a distinct block of size $n$ (there are a total of $K$ blocks).
First, Lemma~\ref{lma:norm_preservation} shows that the expectation of the norm of $\mtx{W}^T \vctz$ is preserved.
\begin{lemma} \label{lma:norm_preservation}
\[
	\E( \| \mtx{W}^T \vctz \|^2_2 ) = \| \vctz \|^2_2
\]
\end{lemma}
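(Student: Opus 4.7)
The plan is to expand $\|\mtx{W}^T \vctz\|_2^2$ as a double sum and evaluate the expectation term by term. Let $\vct{r}_{i,j} \in \R^{MD}$ denote the column of $\mtx{W}^T$ (equivalently the row of $\mtx{W}$) indexed by filter $i \in \{1,\dots,K\}$ and shift $j \in \{0,\dots,n-1\}$; by construction it is the concatenation across the $M$ channels of the shifted filter vectors $\vct{w}_{i,m}^j$. Writing $\mtx{W}^T \vctz = \sum_{i,j} z_{i,j}\, \vct{r}_{i,j}$, we get
$$\|\mtx{W}^T \vctz\|_2^2 = \sum_{(i,j),(i',j')} z_{i,j} z_{i',j'} \,\langle \vct{r}_{i,j}, \vct{r}_{i',j'}\rangle,$$
and since the channel blocks of $\vct{r}_{i,j}$ and $\vct{r}_{i',j'}$ occupy the same coordinates, $\langle \vct{r}_{i,j}, \vct{r}_{i',j'}\rangle = \sum_{m=1}^{M} \langle \vct{w}_{i,m}^j, \vct{w}_{i',m}^{j'}\rangle$.

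The crux of the argument is to show that $\E \langle \vct{r}_{i,j}, \vct{r}_{i',j'}\rangle$ equals $1$ when $(i,j)=(i',j')$ and $0$ otherwise, which I would do in three cases. First, if $i \neq i'$, the filters $\vct{w}_{i,m}$ and $\vct{w}_{i',m}$ are independent with zero-mean entries, so each summand has zero expectation. Second, if $i = i'$ but $j \neq j'$, the inner product $\langle \vct{w}_{i,m}^j, \vct{w}_{i,m}^{j'}\rangle$ expands as a sum over the overlap region of products $w_{i,m}(a)\,w_{i,m}(a-(j'-j)t)$; since the shift offset $(j'-j)t$ is nonzero, every such product pairs two \emph{distinct} entries of the i.i.d.\ filter, whose expected product vanishes. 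Third, if $(i,j)=(i',j')$, then $\E\|\vct{w}_{i,m}^j\|_2^2 = \sum_{a=1}^{\ell} \E[w_{i,m}(a)^2] = \ell \cdot \tfrac{1}{M\ell} = \tfrac{1}{M}$, and summing over the $M$ channels yields $\E\|\vct{r}_{i,j}\|_2^2 = 1$.

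Plugging these in, all off-diagonal terms of the double sum vanish in expectation, and we obtain $\E\|\mtx{W}^T \vctz\|_2^2 = \sum_{(i,j)} z_{i,j}^2 = \|\vctz\|_2^2$. I expect the only subtle step to be case two, where one must notice that integer shifts of a single filter can only pair entries at distinct indices, which is exactly where the i.i.d.\ zero-mean assumption provides the needed decorrelation; the model-$k$-sparsity of $\vctz$ plays no role in this expectation identity (it is used in the concentration half of Theorem~\ref{thm:modelRIP}, not here), so no further obstacle should arise.
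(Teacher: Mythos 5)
Your proof is correct and follows essentially the same route as the paper: the paper also reduces the claim to $\E(\mtx{W}\mtx{W}^T)=\mtx{I}$, asserting that $\E\bigl(\bigl(\vct{w}_{i_1,m}^{j_1}\bigr)^T\bigl(\vct{w}_{i_2,m}^{j_2}\bigr)\bigr)=0$ whenever $i_1\neq i_2$ or $j_1\neq j_2$ and that the scaling makes the diagonal entries equal to $1$ in expectation. Your three-case analysis (in particular spelling out why distinct shifts of the \emph{same} filter only pair distinct i.i.d.\ entries) is just a more explicit rendering of the same argument, and your observation that model-sparsity is irrelevant to this identity is also consistent with the paper.
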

\begin{proof}
Note that each entry of $\mtx{W}^T$ is either zero or Gaussian random variable $w \sim N(0,1)$ before scaling.
Therefore, it is obvious that $\E( \mtx{W} \mtx{W}^T) = \mtx{I}$ since each row of $\mtx{W}$ satisfies $\E \left( \left( \vct{w}_{i_1,m}^{j_1} \right)^T \left( \vct{w}_{i_2,m}^{j_2} \right) \right) = 0$ if $j_1 \neq j_2$ or $i_1 \neq i_2$ for any $m = 1 \dots M$, and we normalized the random variables so that $\E \left( \left\| \left[ \left( \vct{w}_{i,1}^j \right)^T, \ldots, \left( \vct{w}_{i,M}^j \right)^T \right] \right\|_2 \right) = 1$ for all $i,j$'s.
Finally, we have
\begin{align*}
	\E \left( \| \mtx{W}^T \vctz \|_2^2 \right) &= \E \left( \vctz^T \mtx{W} \mtx{W}^T \vctz \right)
		= \vctz^T \E \left( \mtx{W} \mtx{W}^T \right) \vctz
		= \vctz^T \vctz = \| \vctz \|_2^2.
\end{align*}
\end{proof}

Let $\vct{y} = \mtx{W}^T \vctz$.
We aim to show that the square norm of the random variable $\| \vct{y} \|_2^2$ concentrates tightly about its mean; i.e., with exceedingly low probability
\[
	\Big| \|\vct{y}\|_2^2 -  \| \vctz \|^2_2 \Big| > \delta \| \vctz \|^2_2.
\]
To do so, we need several properties of sub-Gaussian and sub-exponential random variables.
A mean-zero \emph{sub-Gaussian} random variable $Z$ has a moment generating function that satisfies
\[
	\E( \exp (tZ) ) \leq \exp(t^2 C^2)
\]
for all $t \in \R$ and some constant $C$.
The \emph{sub-Gaussian norm} of $Z$, denoted $\| Z \|_{\psi_2}$ is
\[
	\| Z \|_{\psi_2} = \sup_{p \geq 1} \frac{1}{\sqrt p} \Big( \E |Z|^p \Big)^{1/p}.
\]
If $Z \sim N(0,\sigma^2)$, then $\|Z\|_{\psi_2} \leq c \sigma$ where $c$ is a positive constant (following Definition 5.7 in \citet{Vershynin:2010vk}).

A \emph{sub-exponential} random variable $X$ satisfies\footnote{There are two other equivalent properties. See~\citet{Vershynin:2010vk} for details.}
\[
	\Prob \Big( |X| > t \Big) \leq \exp(1 - t/C)
\]
for all $t \geq 0$.

 Let $\vct{y}_i$ denote the $i$th entry of the vector $\vct{y} = \mtx{W}^T \vctz$.
 We can write
\[
		\vct{y}_i = \sum_{j=1}^{Kn} \mtx{W}_{i,j} \vctz_j
\]
and observe that $\vct{y}_i$ is a linear combination of i.i.d. sub-Gaussian random variables (or it is identically equal to 0) and, as such, is itself a sub-Gaussian random variable with zero mean and sub-Gaussian norm $\|\vct{y}_i\|_{\psi_2} \leq C/\sqrt{M\ell} \|w\|_{\psi_2} \| \vctz \|_2$ (see~\citet{Vershynin:2010vk}, Lemma 5.9).

The structure of the random matrix and how many non-zero entries are in row $i$ of $\mtx{W}$ do enter the more refined bound on the sub-Gaussian norm of $\|\vct{y}_i\|_{\psi_2} $ (again, see~\citet{Vershynin:2010vk}, Lemma 5.9 for details) but we ignore such details for this estimate as they are not necessary for the next estimate.

To obtain a concentration bound for $\|\vct{y}_i\|^2_2$, we recall from~\citet{Park:2011iw} and \citet{Vershynin:2010vk} that the sum of squares of sub-Gaussian random variables tightly concentrate.
\begin{theorem} \label{thm:modelRIP-sub}
	Let $Y_1,\ldots, Y_{MD}$ be independent sub-Gaussian random variables with sub-Gaussian norms $\|Y_i\|_{\psi_2}$ for all $i = 1, \ldots, MD$.
	Let $T = \max_i \| Y_i \|_{\psi_2}$.
	For every $t \geq 0$ and every $\vct{a} \in \R^{MD}$ and a positive constant $C$, 
	\[
		\Prob \Bigg( \Big| \sum_{i=1}^{MD} \vct{a}_i(Y_i^2 - \E Y_i^2) \Big| \geq t \Bigg) \leq 
			2 \exp\Bigg( -C \min\Big( \frac{t^2}{T^2 \|\vct{a}\|_2^2}, \frac{t}{T \|\vct{a}\|_\infty} \Big)  \Bigg).
	\]
\end{theorem}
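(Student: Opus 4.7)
The plan is to reduce the claim to a standard Bernstein-type concentration inequality for sums of independent centered sub-exponential random variables. The key observation is that the square of a sub-Gaussian random variable is sub-exponential, so each summand $\vct{a}_i(Y_i^2 - \E Y_i^2)$ is a centered sub-exponential, and the standard Bernstein bound then applies directly.

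First, I would establish that $\|Y_i^2\|_{\psi_1} \le C \|Y_i\|_{\psi_2}^2 \le CT^2$. Using the moment-based characterization of the $\psi_2$ norm one has $(\E|Y_i|^{2p})^{1/p} \le p\,\|Y_i\|_{\psi_2}^2$, which rearranges to $\sup_{p\ge 1}\frac{1}{p}(\E|Y_i^2|^p)^{1/p}\le \|Y_i\|_{\psi_2}^2$, giving the desired $\psi_1$ bound on $Y_i^2$. Centering preserves sub-exponentiality up to a multiplicative constant, so $\|Y_i^2 - \E Y_i^2\|_{\psi_1} \le C T^2$ as well; weighting by $\vct{a}_i$ scales this norm linearly, so $\|\vct{a}_i(Y_i^2 - \E Y_i^2)\|_{\psi_1} \le C |\vct{a}_i| T^2$.

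Next I would invoke the Bernstein-type inequality for sums of independent centered sub-exponential random variables $X_i$ with $\|X_i\|_{\psi_1}$ finite, namely
\[
\Prob\!\left(\Bigl|\sum_i X_i\Bigr| \ge t\right) \le 2\exp\!\left(-c\min\!\left(\frac{t^2}{\sum_i \|X_i\|_{\psi_1}^2},\ \frac{t}{\max_i \|X_i\|_{\psi_1}}\right)\right).
\]
Substituting $X_i = \vct{a}_i(Y_i^2-\E Y_i^2)$ and the $\psi_1$ estimates from the previous step, one gets $\sum_i \|X_i\|_{\psi_1}^2 \le C^2 T^4 \|\vct{a}\|_2^2$ and $\max_i \|X_i\|_{\psi_1} \le C T^2 \|\vct{a}\|_\infty$, yielding the stated concentration bound (modulo the powers of $T$, which, depending on normalization conventions, may equivalently be written with $T^2$ and $T$ as in the theorem once $T$ is interpreted as $\|Y_i\|_{\psi_2}^2$ or the constants are absorbed appropriately).

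The main technical obstacle is establishing the Bernstein inequality itself, since the $\psi_1$-norm estimates are classical moment calculations. The Bernstein bound is proved via the Chernoff trick: control $\E\exp(\lambda X_i)$ for small $\lambda$ using the sub-exponential MGF bound $\E\exp(\lambda X_i)\le \exp(C\lambda^2\|X_i\|_{\psi_1}^2)$ valid for $|\lambda|\le 1/(C\|X_i\|_{\psi_1})$, multiply across the independent $X_i$'s, apply Markov, and optimize $\lambda$; the minimum in the exponent arises because the optimal $\lambda$ saturates the sub-exponential MGF's validity range in the large-deviation regime and gives a Gaussian-type tail only in the moderate-deviation regime. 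Rather than reproduce this argument, I would cite Proposition~5.16 of \citet{Vershynin:2010vk} (or the analogous statement in \citet{Park:2011iw}), since the paper's own convention is to invoke these results off the shelf.
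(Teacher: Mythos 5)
Your proposal is correct and follows essentially the same route as the paper, which offers no argument beyond citing Proposition~5.16 of Vershynin for this statement; you simply spell out the standard reduction (squares of sub-Gaussians are sub-exponential, centering and scaling preserve the $\psi_1$ bound, then apply the weighted Bernstein inequality). Your remark about the powers of $T$ is well taken --- a literal application of the Bernstein bound yields $T^4$ and $T^2$ in the denominators rather than $T^2$ and $T$ as stated, but this is immaterial since the paper only invokes the theorem after normalizing so that $T=1$.
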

We note that although some entries $\vct{y}_i$ may be identically zero, depending on the sparsity pattern of $\vctz$, not all entries are.
Let us define $\tilde{\vct{y}}_i = \frac{\vct{y}_i}{\|\vct{y}_i\|_{\psi_2}}$ so that $\|\tilde{\vct{y}}_i\|_{\psi_2} = 1$.

From Lemma~\ref{lma:norm_preservation} and the relation $\vct{y} = \mtx{W}^T \vct{z}$, we have 
\[
	\Prob\Big( \Big| \|\vct{y}\|_2^2 - \|\vctz\|_2^2 \Big| > \delta \|\vctz\|_2^2 \Big) = 
	\Prob\Bigg( \Big|\sum_{i=1}^{MD} \|\vct{y}_i\|^2_{\psi_2} (\tilde{\vct{y}}^2_i - \E \tilde{\vct{y}}^2_i  ) \Big| > \delta \|\vctz\|_2^2 \Bigg).
\]
See Proposition 5.16 in \citet{Vershynin:2010vk} for the proof of Theorem~\ref{thm:modelRIP-sub}.
We apply Theorem~\ref{thm:modelRIP-sub} to the sub-Gaussian random variables $\tilde{\vct{y}}_i$ with the weights $\| \vct{y}_i \|^2_{\psi_2} $.
We have
\[
	\|\vct{a}\|_2^2 = \sum_{i=1}^{MD} \|\vct{y}_i \|^4_{\psi_2} \leq
	\frac{C D \|w\|_{\psi_2}^4 \|\vctz\|_2^4}{M \ell^2} \quad\text{and}\quad
	\|\vct{a}\|_\infty \leq \frac{C \|w\|_{\psi_2}^2 \|\vctz\|_2^2}{M \ell}.
\]

If we set $T=1$, $t=\delta \|\vctz\|_2^2$, and use the above estimates for the norms of $\vct{a}$, we have
\begin{equation}
\label{eqn:CoM}
	\Prob\Big( \Big| \|\vct{y}\|_2^2 - \|\vctz\|_2^2 \Big| > \delta \|\vctz\|_2^2 \Big) \leq
	2 \exp\Bigg( -C \min\Big( 
	   \frac{\delta^2 M \ell^2} {D \|w\|_{\psi_2}^4} ,
	   \frac{\delta M \ell} { \|w\|_{\psi_2}^2 }
	 \Big)  \Bigg).
\end{equation}

Finally, we use the concentration of measure result in a crude union bound to bound the failure probability over all vectors $\vctz \in {\cal M}_k$.
We take $n^k\binom{K}{k} \approx (nK)^k$ and $\epsilon$ for a desired constant failure probability.
Using the smaller term in \eqref{eqn:CoM}, (note that $\delta < 1$, $\ell / D < 1$, and $\|w\|_{\psi_2} \leq 1$) we have
\[
	\exp\Big(-C \frac{M \ell^2 \delta^2}{D \|w\|_{\psi_2}^2} \Big) \exp\Big(k (\log(K) + \log(n))\Big) \leq \exp(\log(\epsilon))
\]
which implies
\[
	\frac{M \ell^2}{D} \geq \frac{\|w\|_{\psi_2}^2}{\delta^2} \Bigg( k (\log(K) 
	   + \log(n)) - \log(\epsilon) \Bigg) = \frac{C}{\delta^2}\Big( k (\log(K) + \log(n))
	    - \log(\epsilon)\Big).
\]
Therefore, if design our matrix $\mtx{W}$ as described and with the parameter relationship as above, the matrix $\mtx{W}^T$ satisfies the model-RIP for ${\cal M}_k$ and parameter $\delta$ with probability $1 - \epsilon$. 
\end{proof}

Let us discuss the relationship amongst the parameters in our result.
First, if we have only one channel $M = 1$ and the filter length $\ell = D$
; namely,
\[
	D \geq \frac{C}{\delta^2} \left( k (\log(K) + \log(n)) - \log(\epsilon) \right).
\]
If $\ell < D$ (i.e., the filters are much shorter than the length of the input signal as in a CNN), then we can compensate by adding more channels; i.e., the filter length $\ell$ needs to be larger than $\sqrt{D}$, or, if add more channels, $\sqrt{D / M}$.

 \section{Mathematical Analysis: Reconstruction Bounds}
\label{sec:reconstruct}

The consequences of having the model-RIP are two-fold.
The first is that if we assume that an input image is the structured sparse linear combination of filters, $\vctx = \mtx{W}^T \vctz$ where $\vctz \in {\cal M}_k$ and $\mtx{W}^T$ satisfies the model-RIP, then we know an upper and lower bound on the norm of $\vctx$ in terms of the norm of its sparse coefficients, $\| \vctx \|_2 \leq (1 \pm \delta) \| \vctz \|_2$.
Additionally,
\[
	\| \vctz \|_2 \leq \frac{1}{\sqrt{1 - \delta}} \| \vctx\|_2.
\]

More importantly, when we calculate the hidden units of $\vctx$,
\[
		\vcthid = \mtx{W} \vctx = \mtx{W} \mtx{W}^T \vctz,
\]
then we can see that the computation of $\vcthid$ is nothing other than the first step of a reconstruction algorithm analogous to that of model-based compressed sensing.
As a result, we have a bound on the error between $\vcthid$ and $\vctz$ and we see that we can analyze the approximation properties of a feedfoward CNN and its linear reconstruction algorithm.
In particular, we can conclude that a feedforward CNN and a linear reconstruction algorithm provide a good approximation to the original input image.

\begin{reftheorem}{thm:reconstruction}{(Restated)}
	We assume that $\mtx{W}^T$ satisfies the ${\cal M}_k^2$-RIP with constant $\delta_k \leq \delta_{2k} < 1$.
	If we use $\mtx{W}$ in a single layer CNN both to compute the hidden units $\vcthatz$ and to reconstruct the input $\vctx$ from these hidden units as $\vcthatx$ so that $\hat{\vctx}=\mtx{W}^T\mathbb{M}(\mtx{W}\vctx,k)$, the error in our reconstruction is  
	\[
		\|\vcthatx - \vctx\|_2 \leq \frac{5 \delta_{2k}}{1 - \delta_{k}}
					\frac{\sqrt{1 + \delta_{2k}}}{\sqrt{1 - \delta_{2k}}}  \|\vctx\|_2.
	\]
\end{reftheorem}
\begin{proof}
To show this result, we recall the following lemmas from~\citet{Baraniuk:2010hg} and rephrase them in the setting of a feedforward CNN.
Note that Lemma~\ref{lma:rip-bound} and ~\ref{lem:contaminate} are the same as Lemma~1 and 2 in \citet{Baraniuk:2010hg}, respectively.

\begin{lemma}
	Suppose $\mtx{W}^T$ has ${\cal M}_k$-RIP with constant $\delta_k$.
	Let $\Omega$ be a support corresponding to a subspace in ${\cal M}_k$.
	Then we have the following bounds:
	\begin{align}
		\| \mtx{W}_\Omega \vctx \|_2 & \leq \sqrt{1 + \delta_k} \|\vctx\|_2 
		\label{eq:recon} \\
		\| \mtx{W}_\Omega \mtx{W}^T_\Omega \vctz \|_2 & \leq (1 + \delta_k) \|\vctz\|_2 
		\label{eq:idUB} \\
		\| \mtx{W}_\Omega \mtx{W}^T_\Omega \vctz \|_2 & \geq (1 - \delta_k) \|\vctz\|_2 
		\label{eq:idLB}
	\end{align}
	\label{lma:rip-bound}
\end{lemma}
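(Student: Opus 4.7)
The plan is to recognize that all three inequalities are direct spectral consequences of the model-RIP applied on the fixed support $\Omega$. The model-RIP in \eqref{eq:model-rip}, specialized to vectors supported on $\Omega$ (which indexes a subspace of ${\cal M}_k$), says that for every $\vct{u} \in \R^{|\Omega|}$,
\[
(1-\delta_k)\|\vct{u}\|_2^2 \;\leq\; \|\mtx{W}^T_\Omega \vct{u}\|_2^2 \;\leq\; (1+\delta_k)\|\vct{u}\|_2^2.
\]
Equivalently, the symmetric positive semidefinite matrix $\mtx{W}_\Omega \mtx{W}^T_\Omega$ has all eigenvalues in $[1-\delta_k,\, 1+\delta_k]$. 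The whole lemma will follow from this observation.

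For \eqref{eq:recon}, I would appeal to the identity $\|\mtx{W}_\Omega\|_{\mathrm{op}} = \|\mtx{W}^T_\Omega\|_{\mathrm{op}} = \sqrt{\lambda_{\max}(\mtx{W}_\Omega \mtx{W}^T_\Omega)}$. The spectral bound yields $\|\mtx{W}_\Omega\|_{\mathrm{op}} \leq \sqrt{1+\delta_k}$, and applying the operator norm to an arbitrary $\vctx \in \R^{MD}$ gives $\|\mtx{W}_\Omega \vctx\|_2 \leq \sqrt{1+\delta_k}\,\|\vctx\|_2$.

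For \eqref{eq:idUB}, since $\mtx{W}^T_\Omega \vctz$ is the action of $\mtx{W}^T_\Omega$ on the restriction $\vctz_\Omega$, the vector $\mtx{W}_\Omega \mtx{W}^T_\Omega \vctz$ is precisely the symmetric matrix $\mtx{W}_\Omega \mtx{W}^T_\Omega$ acting on $\vctz_\Omega$, so its norm is bounded by the largest eigenvalue, giving $\|\mtx{W}_\Omega \mtx{W}^T_\Omega \vctz\|_2 \leq (1+\delta_k)\|\vctz\|_2$. For \eqref{eq:idLB}, I would chain Cauchy--Schwarz with the lower RIP bound:
\[
\|\mtx{W}_\Omega \mtx{W}^T_\Omega \vctz\|_2 \,\|\vctz\|_2 \;\geq\; \vctz^T \mtx{W}_\Omega \mtx{W}^T_\Omega \vctz \;=\; \|\mtx{W}^T_\Omega \vctz\|_2^2 \;\geq\; (1-\delta_k)\|\vctz\|_2^2,
\]
and then divide by $\|\vctz\|_2$ (handling $\vctz=0$ trivially) to obtain $\|\mtx{W}_\Omega \mtx{W}^T_\Omega \vctz\|_2 \geq (1-\delta_k)\|\vctz\|_2$.

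There is no real technical obstacle here; the proof is essentially a repackaging of the RIP as a spectral statement on $\mtx{W}_\Omega \mtx{W}^T_\Omega$. The only care needed is notational: keeping straight that the subscript $\Omega$ on $\mtx{W}$ means selecting rows indexed by $\Omega$ (equivalently, columns of $\mtx{W}^T$), while the subscript is implicitly applied to $\vctz$ when it appears inside $\mtx{W}^T_\Omega \vctz$, and verifying that the set $\Omega$ is the support of a valid element of ${\cal M}_k$ so that model-RIP may be invoked.
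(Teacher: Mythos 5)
Your proof is correct. The paper itself gives no argument for this lemma --- it imports it verbatim as Lemma~1 of Baraniuk et al.\ [2010] --- and your spectral reading of the model-RIP on a fixed support (eigenvalues of $\mtx{W}_\Omega \mtx{W}^T_\Omega$ confined to $[1-\delta_k,\,1+\delta_k]$, operator-norm bound for \eqref{eq:recon} and \eqref{eq:idUB}, Cauchy--Schwarz against the quadratic form for \eqref{eq:idLB}) is exactly the standard proof of that cited result, so you have correctly supplied the details the paper omits.
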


\begin{lemma}
\label{lem:contaminate}
	Suppose that $\mtx{W}^T$ has ${\cal M}_k^2$-RIP with constant $\delta_{2k}$.
	Let $\Omega$ be a support corresponding to a subspace of ${\cal M}_k$ and suppose that $\vctz \in {\cal M}_k$ (not necessarily supported on $\Omega$).
	Then
	\[
		\| \mtx{W}_\Omega \mtx{W}^T \vctz|_{\Omega^c} \|_2 \leq \delta_{2k} \| \vctz|_{\Omega^c} \|_2.
	\]
\end{lemma}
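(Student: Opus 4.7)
The plan is to reduce the stated $\ell_2$ bound to an inner product via duality, then apply a polarization identity together with the ${\cal M}_k^2$-RIP hypothesis. Throughout, let $\vct{u} := \vctz|_{\Omega^c}$ for brevity; note that $\vct{u} \in {\cal M}_k$ since it is a restriction of $\vctz \in {\cal M}_k$.

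First I would rewrite the left-hand side by $\ell_2$ duality. Since $\mtx{W}_\Omega \mtx{W}^T \vct{u} \in \R^{|\Omega|}$,
\[
\|\mtx{W}_\Omega \mtx{W}^T \vct{u}\|_2 = \sup_{\vct{v} \in \R^{|\Omega|},\, \|\vct{v}\|_2 = 1} \langle \vct{v},\, \mtx{W}_\Omega \mtx{W}^T \vct{u}\rangle.
\]
For each test $\vct{v}$, extend it to a vector $\tilde{\vct{v}} \in \R^{Kn}$ supported on $\Omega$ by zero-padding off $\Omega$. Then $\langle \vct{v}, \mtx{W}_\Omega \mtx{W}^T \vct{u}\rangle = \langle \tilde{\vct{v}}, \mtx{W}\mtx{W}^T \vct{u}\rangle = \langle \mtx{W}^T \tilde{\vct{v}},\, \mtx{W}^T \vct{u}\rangle$, so it suffices to bound this last inner product uniformly over unit-norm $\tilde{\vct{v}}$ supported on $\Omega$.

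Next, observe that $\tilde{\vct{v}} \in {\cal M}_k$ (its support lies in the model-$k$-sparse support set $\Omega$), and $\tilde{\vct{v}}$ and $\vct{u}$ have disjoint supports. Consequently both $\tilde{\vct{v}} + \vct{u}$ and $\tilde{\vct{v}} - \vct{u}$ are linear combinations of two vectors from ${\cal M}_k$, hence lie in ${\cal M}_k^2$, and the disjoint-support Pythagoras gives $\|\tilde{\vct{v}} \pm \vct{u}\|_2^2 = \|\tilde{\vct{v}}\|_2^2 + \|\vct{u}\|_2^2$. Applying the polarization identity
\[
\langle \mtx{W}^T \tilde{\vct{v}},\, \mtx{W}^T \vct{u}\rangle = \tfrac{1}{4}\bigl(\|\mtx{W}^T(\tilde{\vct{v}}+\vct{u})\|_2^2 - \|\mtx{W}^T(\tilde{\vct{v}}-\vct{u})\|_2^2\bigr)
\]
together with the upper ${\cal M}_k^2$-RIP bound on the first squared norm and the lower ${\cal M}_k^2$-RIP bound on the second, the $(1\pm\delta_{2k})$ constants partially cancel and leave
\[
\langle \mtx{W}^T \tilde{\vct{v}},\, \mtx{W}^T \vct{u}\rangle \;\leq\; \tfrac{\delta_{2k}}{2}\bigl(\|\tilde{\vct{v}}\|_2^2 + \|\vct{u}\|_2^2\bigr).
\]

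Finally I would promote this additive bound to the desired multiplicative bound $\delta_{2k}\|\tilde{\vct{v}}\|_2\|\vct{u}\|_2$ by homogeneity: replace $\tilde{\vct{v}}$ by $\alpha\tilde{\vct{v}}$ and $\vct{u}$ by $\vct{u}/\alpha$, which keeps both in ${\cal M}_k$ with disjoint supports and leaves the inner product invariant, then optimize over $\alpha>0$ (choosing $\alpha^2 = \|\vct{u}\|_2/\|\tilde{\vct{v}}\|_2$) to equalize the two squared-norm terms. Taking the supremum over unit-norm $\tilde{\vct{v}}$ then yields $\|\mtx{W}_\Omega \mtx{W}^T \vctz|_{\Omega^c}\|_2 \leq \delta_{2k}\|\vctz|_{\Omega^c}\|_2$.

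I do not foresee a genuine obstacle. The only points requiring care are (i) confirming that $\tilde{\vct{v}}\pm\vct{u}$ indeed lie in ${\cal M}_k^2$ — immediate from the definition as each is a sum of two ${\cal M}_k$ vectors — and (ii) noting that a naive AM-GM on $\|\tilde{\vct{v}}\|_2^2 + \|\vct{u}\|_2^2 \geq 2\|\tilde{\vct{v}}\|_2\|\vct{u}\|_2$ points in the wrong direction, which is why the homogeneous rescaling step is the right way to conclude rather than a direct inequality.
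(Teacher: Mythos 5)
Your proposal is correct. Note, however, that the paper does not actually prove this lemma: it simply states that it is Lemma~2 of \citet{Baraniuk:2010hg} and defers to that reference, where the standard argument bounds the spectral norm of the off-diagonal block $\mtx{W}_\Omega\mtx{W}^T_{\Omega'}$ for disjoint model-sparse supports $\Omega,\Omega'$ by observing it is a submatrix of $\mtx{W}_{\Omega\cup\Omega'}\mtx{W}^T_{\Omega\cup\Omega'}-\mtx{I}$, whose eigenvalues lie in $[-\delta_{2k},\delta_{2k}]$ by the ${\cal M}_k^2$-RIP. Your route via duality, polarization, and homogeneous rescaling is the other classical proof of this ``near-orthogonality'' consequence of RIP, and every step adapts correctly to the model-sparse setting: $\vctz|_{\Omega^c}$ and the zero-padded test vector $\tilde{\vct{v}}$ both remain in ${\cal M}_k$ (restriction and scaling preserve the one-nonzero-per-block structure), their sum and difference lie in ${\cal M}_k^2$ exactly as the paper defines it, and the disjoint-support Pythagoras plus the optimization over $\alpha$ delivers the multiplicative constant $\delta_{2k}$ with no loss. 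The spectral argument is slightly shorter and gives the operator-norm statement directly; your polarization argument is more elementary (no appeal to eigenvalue interlacing or submatrix norms) and makes explicit where both the upper and lower RIP inequalities are used. Either is a legitimate replacement for the missing proof.
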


Let $\Pi$ denote the support of the ${\cal M}_k$ sparse vector $\vctz$.
Set $\vcthid = \mtx{W} \vctx$ and set $\vcthatz$ to be the result of max pooling applied to the vector $\vcthid$, or the best fit (with respect to the $\ell_2$ norm) to $\vcthid$ in the model ${\cal M}_k$.
Let $\Omega$ denote the support set of $\vcthatz \in {\cal M}_k$.
For simplicity, we assume $|\Pi| = k = |\Omega|$. 

\begin{lemma}[Identification]
	The support set, $\Omega$, of the switch units captures a significant fraction of the total energy in the coefficient vector $\vctz$
	\[
		\| \vctz|_{\Omega^c} \|_2 \leq \frac{2\delta_{2k}}{1 - \delta_{k}} \| \vctz \|_2.
	\]
\end{lemma}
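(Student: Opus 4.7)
\smallskip
\noindent\textbf{Proof plan.} The plan is to mimic the standard identification step from model-based CoSaMP/IHT (as in \citet{Baraniuk:2010hg}), using the two auxiliary estimates Lemma~\ref{lma:rip-bound} and Lemma~\ref{lem:contaminate} already in hand. The first observation is that, since $\vctz$ is supported on $\Pi$, the quantity to control simplifies to $\|\vctz|_{\Omega^c}\|_2 = \|\vctz|_{\Pi\setminus\Omega}\|_2$. So it suffices to show that the mass of $\vctz$ on the ``missed'' coordinates $\Pi\setminus\Omega$ is small.

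Next I would exploit the optimality of $\Omega$. Because $\vcthatz$ is the $\ell_2$-best model-$k$-sparse approximation to $\vcthid=\mtx{W}\mtx{W}^T\vctz$, and $\Pi\in\mathcal{M}_k$ is a competing support, we get $\|\vcthid|_\Omega\|_2\ge\|\vcthid|_\Pi\|_2$; expanding over the common piece $\Omega\cap\Pi$ and cancelling yields the symmetric comparison $\|\vcthid|_{\Omega\setminus\Pi}\|_2\ge\|\vcthid|_{\Pi\setminus\Omega}\|_2$. I then bound each side. Noting that both $\Omega\setminus\Pi$ and $\Pi\setminus\Omega$ are subsets of $\mathcal{M}_k$-admissible supports, the left-hand side is bounded from above using Lemma~\ref{lem:contaminate}: since $\vctz$ is supported on $\Pi$, which is disjoint from $\Omega\setminus\Pi$, we obtain $\|\vcthid|_{\Omega\setminus\Pi}\|_2 = \|\mtx{W}_{\Omega\setminus\Pi}\mtx{W}^T\vctz\|_2 \leq \delta_{2k}\|\vctz\|_2$. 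For the right-hand side, split $\vctz=\vctz|_{\Pi\setminus\Omega}+\vctz|_{\Pi\cap\Omega}$, apply the lower bound~\eqref{eq:idLB} of Lemma~\ref{lma:rip-bound} to the first summand through $\mtx{W}_{\Pi\setminus\Omega}\mtx{W}^T_{\Pi\setminus\Omega}$, and apply Lemma~\ref{lem:contaminate} to the cross term (whose support $\Pi\cap\Omega$ is disjoint from $\Pi\setminus\Omega$) to obtain the lower bound $\|\vcthid|_{\Pi\setminus\Omega}\|_2\ge(1-\delta_k)\|\vctz|_{\Pi\setminus\Omega}\|_2-\delta_{2k}\|\vctz|_{\Pi\cap\Omega}\|_2$.

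Chaining these two estimates gives $(1-\delta_k)\|\vctz|_{\Pi\setminus\Omega}\|_2 \leq \delta_{2k}\|\vctz\|_2 + \delta_{2k}\|\vctz|_{\Pi\cap\Omega}\|_2 \leq 2\delta_{2k}\|\vctz\|_2$, and the claim follows after dividing by $1-\delta_k$ and substituting $\|\vctz|_{\Omega^c}\|_2 = \|\vctz|_{\Pi\setminus\Omega}\|_2$. There is no deep obstacle; the main care-point is bookkeeping to verify that every invocation of Lemma~\ref{lma:rip-bound} or Lemma~\ref{lem:contaminate} uses a support that is truly $\mathcal{M}_k$-admissible. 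This is automatic here because $\Omega,\Pi\in\mathcal{M}_k$ have at most one non-zero per block and the model is closed under taking subsets, so each of $\Omega\setminus\Pi$, $\Pi\setminus\Omega$, and $\Pi\cap\Omega$ also has at most one non-zero per block.
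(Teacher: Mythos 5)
Your proposal is correct and follows essentially the same route as the paper's proof: the same optimality comparison $\|\vcthid|_{\Omega\setminus\Pi}\|_2\ge\|\vcthid|_{\Pi\setminus\Omega}\|_2$, the same upper bound on the left side via Lemma~\ref{lem:contaminate}, and the same lower bound on the right side via \eqref{eq:idLB} plus the cross-term estimate (your split over $\Pi\cap\Omega$ coincides with the paper's split over $\Omega$ because $\vctz$ is supported on $\Pi$). Your explicit check that $\Omega\setminus\Pi$, $\Pi\setminus\Omega$, and $\Pi\cap\Omega$ remain $\mathcal{M}_k$-admissible is a nice bookkeeping point the paper leaves implicit.
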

\begin{proof}
	Let $\vcthid_\Omega$ and $\vcthid_\Pi$ be the vector $\vcthid$ restricted to the support sets $\Omega$ and $\Pi$, respectively.
	Since both are support sets for ${\cal M}_k$ and since $\Omega$ is the best support set for $\vcthid$,
	\[
		\|\vcthid - \vcthid_\Omega \|_2 \leq \| \vcthid - \vcthid_\Pi \|_2,
	\]
	and, after several calculations, which is identical to those in the proof of Lemma 3 in \citet{Baraniuk:2010hg},
	we have
	\[
		\| \vcthid|_{\Omega \setminus \Pi} \|_2^2 \geq \| \vcthid|_{\Pi \setminus \Omega} \|_2^2.
	\]
	Using Lemma~\ref{lem:contaminate} and the size $|(\Omega \setminus \Pi) \bigcup \Pi | \leq 2k$, we have
	\[
		\| \vcthid_{\Omega \setminus \Pi} \|_2 = \| \mtx{W}_{\Omega \setminus \Pi} \mtx{W}^T \vctz\|_2
		\leq \delta_{2k} \|\vctz\|_2.
	\]
	
    Using \eqref{eq:idLB} and Lemma~\ref{lem:contaminate},
	we can bound the other side of the inequality as
	\begin{align*}
		\| \vcthid_{\Pi \setminus \Omega} \|_2
		  &= \| \mtx{W}_{\Pi \setminus \Omega} \mtx{W}^T \vctz \|_2 \\
		  &\geq \| \mtx{W}_{\Pi \setminus \Omega} (\mtx{W}^T \vctz|_{\Pi \setminus \Omega} ) \|_2 - 
		  \| \mtx{W}_{\Pi \setminus \Omega} (\mtx{W}^T \vctz|_{\Omega} ) \|_2 \\
		  &\geq (1 - \delta_{k}) \|  \vctz|_{\Pi \setminus \Omega} \|_2 - 
		      \delta_{2k} \|\vctz|_{\Omega} \|_2 \\
		  &\geq (1 - \delta_{k}) \|  \vctz|_{\Pi \setminus \Omega} \|_2 - 
		      \delta_{2k} \|\vctz \|_2.
	\end{align*}
	
	Since the support of $\vctz$ is the set $\Pi$, $\Pi \setminus \Omega = \Omega^c$ for $\vctz$, so we can conclude that
	\[
		\delta_{2k} \|\vctz\|_2 \geq (1 - \delta_{k}) \|\vctz|_{\Omega^c}\|_2 - 
		    \delta_{2k} \|\vctz\|_2,
	\]
	and with some rearrangement, we have
	\[
		\| \vctz|_{\Omega^c}\|_2 \leq \frac{2 \delta_{2k}}{1 - \delta_{k}} \|\vctz\|_2.
	\]
\end{proof}

To set the value of $\vcthatz$ on its support set $\Omega$, we simply set $\vcthatz = \vcthid|_\Omega$ and $\vcthatz|_{\Omega^c} = 0$.
Then
\begin{lemma}[Estimation]
\label{lem:estimate}
	\[
		\| \vctz - \vcthatz \|_2 \leq \frac{5 \delta_{2k}}{1 - \delta_{k}} \|\vctz\|_2
	\]
\end{lemma}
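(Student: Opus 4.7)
My plan is to split the error $\vctz - \vcthatz$ according to the disjoint supports $\Omega$ and $\Omega^{c}$. Since $\vcthatz$ vanishes on $\Omega^{c}$, the Pythagorean-style decomposition gives $\|\vctz - \vcthatz\|_{2}^{2} = \|\vctz|_{\Omega} - \vcthatz|_{\Omega}\|_{2}^{2} + \|\vctz|_{\Omega^{c}}\|_{2}^{2}$, so by the triangle inequality in $\R^{2}$ it suffices to bound the two terms $\|\vctz|_{\Omega} - \vcthatz|_{\Omega}\|_{2}$ and $\|\vctz|_{\Omega^{c}}\|_{2}$ separately. The second term is precisely what the Identification lemma delivers: $\|\vctz|_{\Omega^{c}}\|_{2} \le \frac{2\delta_{2k}}{1-\delta_{k}} \|\vctz\|_{2}$.

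For the first term, I would use the explicit definition $\vcthatz|_{\Omega} = (\mtx{W}\mtx{W}^{T}\vctz)|_{\Omega} = \mtx{W}_{\Omega}\mtx{W}^{T}\vctz$ and split $\vctz = \vctz|_{\Omega} + \vctz|_{\Omega^{c}}$ inside this expression, yielding
\[
  \vctz|_{\Omega} - \vcthatz|_{\Omega}
  \;=\; \bigl(\mathbf{I} - \mtx{W}_{\Omega}\mtx{W}_{\Omega}^{T}\bigr)\vctz|_{\Omega}
      \;-\; \mtx{W}_{\Omega}\mtx{W}^{T}\vctz|_{\Omega^{c}}.
\]
The first piece I would control via the model-RIP: equations \eqref{eq:idUB}--\eqref{eq:idLB} of Lemma~\ref{lma:rip-bound} are equivalent to $(1-\delta_{k})\mathbf{I} \preceq \mtx{W}_{\Omega}\mtx{W}_{\Omega}^{T} \preceq (1+\delta_{k})\mathbf{I}$ on $\R^{|\Omega|}$, so $\|(\mathbf{I}-\mtx{W}_{\Omega}\mtx{W}_{\Omega}^{T})\vctz|_{\Omega}\|_{2} \le \delta_{k}\|\vctz|_{\Omega}\|_{2} \le \delta_{k}\|\vctz\|_{2}$. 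The second piece is a direct application of Lemma~\ref{lem:contaminate}, since $\vctz|_{\Omega^{c}}$ lies in $\mathcal{M}_{k}$ and $\Omega$ supports a vector in $\mathcal{M}_{k}$: this yields $\|\mtx{W}_{\Omega}\mtx{W}^{T}\vctz|_{\Omega^{c}}\|_{2} \le \delta_{2k}\|\vctz|_{\Omega^{c}}\|_{2}$, which we then shrink using the Identification bound once more.

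Combining, $\|\vctz|_{\Omega}-\vcthatz|_{\Omega}\|_{2} \le \delta_{k}\|\vctz\|_{2} + \delta_{2k}\cdot\frac{2\delta_{2k}}{1-\delta_{k}}\|\vctz\|_{2}$, and adding the $\Omega^{c}$ bound gives
\[
  \|\vctz-\vcthatz\|_{2} \;\le\; \frac{\delta_{k}(1-\delta_{k}) + 2\delta_{2k}^{2} + 2\delta_{2k}}{1-\delta_{k}}\|\vctz\|_{2}.
\]
Using $\delta_{k} \le \delta_{2k}$ and $\delta_{2k} < 1$, the numerator is bounded by $5\delta_{2k}$, producing the stated constant. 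The only mildly delicate step is the first one: getting the $\delta_{k}$ factor rather than $\delta_{2k}$ on the diagonal piece requires identifying $\mathbf{I} - \mtx{W}_{\Omega}\mtx{W}_{\Omega}^{T}$ as a symmetric operator whose spectrum is pinned by the $\mathcal{M}_{k}$-RIP applied to vectors supported on the single subspace $\Omega$; everything else is bookkeeping using lemmas already in place.
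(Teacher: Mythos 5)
Your proposal is correct and follows essentially the same route as the paper's proof: the same split of the error over $\Omega$ and $\Omega^{c}$, the same spectral bound $\|\mtx{I}-\mtx{W}_{\Omega}\mtx{W}_{\Omega}^{T}\|_{2}\le\delta_{k}$ from Lemma~\ref{lma:rip-bound}, the same use of Lemma~\ref{lem:contaminate} on the cross term, and the same invocation of the Identification lemma, with only cosmetic differences (your Pythagorean phrasing versus the paper's direct triangle inequality, and a slightly different but equivalent final numerator bookkeeping). No gaps.
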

\begin{proof}
	First, note that $\| \mtx{I} - \mtx{W}_{\Omega} \mtx{W}_{\Omega}^T \|_2 \leq \max\{ (1+\delta_{k})-1, 1-(1-\delta_{k}) \} = \delta_{k}$ since 
	\[
		(1-\delta_k) \leq \sup_{\| \vctz \| \neq 0} \frac{\| \mtx{W}_{\Omega}^T \vctz \|_2^2}{\| \vctz \|_2^2} \left( = \sigma_{\max}^2 ( \mtx{W}_{\Omega}^T ) = \sigma_{\max} ( \mtx{W}_{\Omega} \mtx{W}_{\Omega}^T ) \right) \leq (1+\delta_k),
	\]
	where $\sigma_{\max}$ is the maximum singular value.
	Therefore,
	\begin{align*}
		\| \vctz - \vcthatz \|_2 &\leq \|\vctz|_{\Omega^c} \|_2 + 
		       \| \vctz|_\Omega - \vcthatz|_\Omega \|_2 \\
		&= \|\vctz|_{\Omega^c} \|_2 + \| \vctz|_\Omega - 
		  \mtx{W}_\Omega (\mtx{W}^T \vctz|_{\Omega} + \mtx{W}^T\vctz|_{\Omega^c}) \|_2 \\
		&\leq \| \vctz |_{\Omega^c} \|_2 + \| (\mtx{I} - \mtx{W}_{\Omega} \mtx{W}_{\Omega}^T) \vctz |_{\Omega} \|_2 + \| \mtx{W}_{\Omega} \mtx{W}^T \vctz |_{\Omega^c} \|_2 \\
		&\leq \| \vctz |_{\Omega^c} \|_2 + \| \mtx{I} - \mtx{W}_{\Omega} \mtx{W}_{\Omega}^T \|_2 \| \vctz |_{\Omega} \|_2 + \delta_{2k} \| \vctz |_{\Omega^c} \|_2 \\
		&\leq \|\vctz|_{\Omega^c} \|_2 + \delta_k \| \vctz|_\Omega\|_2 + \delta_{2k} \|\vctz|_{\Omega^c} \|_2 \\
		&\leq \Big( (1 + \delta_{2k}) \frac{2\delta_{2k}}{1 - \delta_{k}} + \delta_k \Big) 
		    \|\vctz\|_2 \\
		&\leq \frac{5 \delta_{2k}}{1 - \delta_{k}} \|\vctz\|_2.
	\end{align*}
\end{proof}

Finally, if we use the autoencoder formulation to reconstruct the original image $\vctx$ by setting $\vcthatx = \mtx{W}^T \vcthatz$, we can estimate the reconstruction error.
We note that $\vcthatz$ is ${\cal M}_k$-sparse by construction and remind the reader that $\mtx{W}^T$ satisfies ${\cal M}_k^2$-model-RIP with constants $\delta_{k} \leq \delta_{2k} \ll 1$.
Then, using Lemma~\ref{lem:estimate} as well as the ${\cal M}_k^2$-sparse properties of $\mtx{W}^T$, 
	\begin{align*}
		\| \vctx - \vcthatx\|_2 &= \| \mtx{W}^T(\vctz - \vcthatz) \|_2 
		    \leq \sqrt{1 + \delta_{2k}} \| \vctz - \vcthatz \|_2 \\ 
		&\leq \frac{5 \delta_{2k}}{1 - \delta_{k}} \sqrt{1 + \delta_{2k}} \| \vctz \|_2 \\
		&\leq \frac{5 \delta_{2k}}{1 - \delta_{k}} \frac{\sqrt{1 + \delta_{2k}}}{\sqrt{1 - \delta_{2k}}}  \|\vctx\|_2.
	\end{align*}
This proves that a feedforward CNN with a linear reconstruction algorithm is an approximate autoencoder and bounds the reconstruction error of the input image in terms of the geometric properties of the filters.
\end{proof} 
\section{More Experimental Results}

\subsection{More Details on Evaluation of CNNs with Gaussian Random Filters}
\label{sec:cifar-random-filters-detailed}
In this section, we provide more details on the network architectures that we used in Table~\ref{tab:cifar-random-filters}. 
For the network architecture,
we add a batch normalization layer together with a learnable scale and bias before the activation so that we do not need to tune the scale of the filters. 
The filter weights of the intermediate layers in the CNNs are not trained after random initialization. 
On top of the network, we use an optional average pooling layer to reduce the feature map size to $4 \times 4$ and a dropout layer for better regularization before feeding the feature to a learnable soft-max classifier for image classification.
We describe the best performing architectures for all cases in Table~\ref{tab:cifar-random-filters-detailed}.

\begin{table}[H]

\begin{center}
\small{}
\begin{tabular}{c||c|c|c|c}
\hline 
Method & \#Layers & 1 layers  & 2 layers  & 3 layers \tabularnewline
\hline 
\multirow{3}{*}{Random filters } & {\footnotesize{}Best } & \multirow{2}{*}{{\footnotesize{}(2048)5c-2p$_{\textrm{max}}$-4p$_{\textrm{ave}}$ }} & {\footnotesize{}(2048)3c-2p$_{\textrm{max}}$- } & {\footnotesize{}(2048)3c-2p$_{\textrm{max}}$-(2048)3c-}\tabularnewline
 & {\footnotesize{}param.} &  & {\footnotesize{}(2048)3c-2p$_{\textrm{max}}$-2p$_{\textrm{ave}}$ } & {\footnotesize{}2p$_{\textrm{max}}$-(1024)3c-2p$_{\textrm{max}}$ }\tabularnewline
\cline{2-5} 
 & Accuracy & 66.5\%  & 74.6\%  & 74.8\%\tabularnewline
\hline 
\multirow{3}{*}{Learned filters} & {\footnotesize{}Best } & \multirow{2}{*}{{\footnotesize{}(1024)5c-2p$_{\textrm{max}}$-4p$_{\textrm{ave}}$ }} & {\footnotesize{}(1024)3c-2p$_{\textrm{max}}$- } & {\footnotesize{}(1024)3c-2p$_{\textrm{max}}$-(1024)3c-}\tabularnewline
 & {\footnotesize{}param.} &  & {\footnotesize{}(1024)3c-2p$_{\textrm{max}}$-2p$_{\textrm{ave}}$ } & {\footnotesize{}2p$_{\textrm{max}}$-(1024)3c-2p$_{\textrm{max}}$ }\tabularnewline
\cline{2-5} 
 & Accuracy & 68.1\%  & 83.3\%  & 89.3\% \tabularnewline
\hline 
\end{tabular}

\end{center}

\caption{Best-performing architecture and classification accuracy of random CNNs on CIFAR-10. ``($\mathtt{[n]}$)$\mathtt{[k]}$c'' denotes a convolution layer with a stride $1$, a kernel size $\mathtt{[k]}$ and $\mathtt{[n]}$ output channels, ``$\mathtt{[k]}$p$_{\textrm{max}}$'' denotes a max pooling layer with a kernel size $\mathtt{[k]}$ and a stride $\mathtt{[k]}$, and ``$\mathtt{[k]}$p$_{\textrm{ave}}$'' denotes a average pooling layer. A typical layer consists of four operations, namely convolution, ReLU, batch normalization, and max pooling.}
\label{tab:cifar-random-filters-detailed}
\end{table}

\subsection{Layer-wise Coherence and Sparsity for AlexNet} \label{sec:alexnet_sparsity}
We present coherence (see Table~\ref{tab:alexnet_coherence}) and sparsity level (see Table~\ref{tab:alexnet_sparsity}) for each layer in AlexNet.
\begin{table}[H]
\small{}
\begin{center}
\begin{tabular}{ c || c | c | c | c | c }
\hline
layer & 1 & 2 & 3 & 4 & 5\\
\hline
coherence of learned filters & 0.9172 & 0.6643 & 0.6200 & 0.6382 & 0.3390 \\
\hline
coherence of random filters & 0.1996 & 0.1263 & 0.0929 & 0.1073 & 0.1026 \\
\hline
\end{tabular}

\end{center}

\caption{Comparison of coherence between learned filters in each layer of AlexNet and Gaussian random filters with corresponding sizes.}
\label{tab:alexnet_coherence}
\end{table}

\begin{table}[H]
\begin{center}
\begin{tabular}{c||c|c|c|c|c|c|c|c}
\hline
layer & conv1 & pool1 & conv2 & pool2 & conv3 & conv4 & conv5 & pool5 \\
\hline
\% of non-zeros & 49.41 & 87.79 & 18.97 & 44.13 & 31.08 & 30.95 & 9.78 & 28.15 \\
\hline
\end{tabular}
\end{center}
\caption{Layer-wise sparsity of AlexNet on ILSVRC-2012 validation set.}
\label{tab:alexnet_sparsity}
\end{table}

\subsection{Visualization of Image Reconstruction for VGGNet}
\label{sec:recon-vgg}
In Figure~\ref{fig:RIP_vgg_recon_images}, we show reconstructed images from each layer using different reconstruction methods via a pretrained decoding network.
\begin{figure}[h!]
\centering
\includegraphics[height=\textheight]{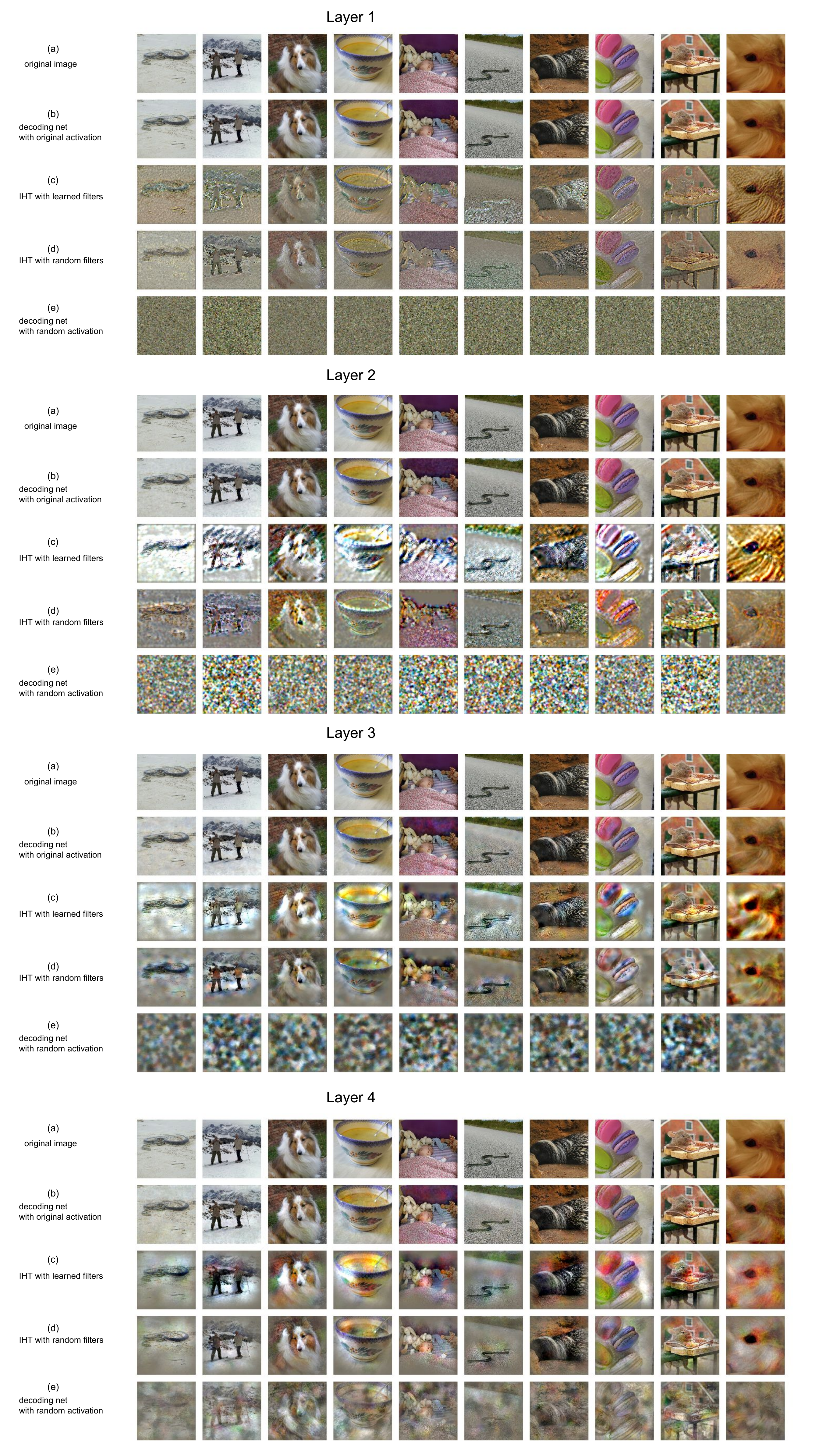}
\caption{Visualization of images reconstructed by a pretrained decoding network with VGGNet's pool$(4)$ activation reconstructed using different methods: (a)~original image, (b)~output of the $5$-layer decoding network with original activation, (c) output of the decoding net with reconstructed activation by IHT with learned filters, (d) output of the decoding net with reconstructed activation by IHT with Gaussian random filters, (e) output of the decoding net with Gaussian random activation.}
\label{fig:RIP_vgg_recon_images}
\end{figure}

\end{document}